\documentclass{article}

\usepackage{microtype}
\usepackage{graphicx}
\usepackage{subcaption}
\usepackage{booktabs} 
\usepackage[most]{tcolorbox}
\usepackage[table,dvipsnames]{xcolor}
\usepackage{adjustbox}
\usepackage{hyperref}
\usepackage{tabularx}
\usepackage{array}
\usepackage{makecell}
\usepackage{ragged2e}
\usepackage{multirow}

\newcolumntype{L}[1]{>{\RaggedRight\arraybackslash}p{#1}}
\newcolumntype{Y}{>{\RaggedRight\arraybackslash}X}

\usepackage[preprint]{icml2026}

\usepackage{amsmath}
\usepackage{amssymb}
\usepackage{mathtools}
\usepackage{amsthm}

\usepackage[capitalize,noabbrev]{cleveref}

\theoremstyle{plain}
\newtheorem{theorem}{Theorem}[section]
\newtheorem{proposition}[theorem]{Proposition}
\newtheorem{lemma}[theorem]{Lemma}

\theoremstyle{definition}
\newtheorem{definition}[theorem]{Definition}

\theoremstyle{remark}

\usepackage[textsize=tiny]{todonotes}

\icmltitlerunning{From Sparse Features to Trustworthy Proxies: Certifying SAE-Based Interpretability}

\begin{document}

\twocolumn[

  \icmltitle{From Sparse Features to Trustworthy Proxies: Certifying SAE-Based Interpretability}



  \icmlsetsymbol{equal}{*}

  \begin{icmlauthorlist}
    \icmlauthor{Dibyanayan Bandyopadhyay}{yyy}
    \icmlauthor{Asif Ekbal}{yyy}
  \end{icmlauthorlist}

  \icmlaffiliation{yyy}{Department of Computer Science and Engineering, Indian Institute of Technology Patna, Bihar, India}

  \icmlcorrespondingauthor{Dibyanayan Bandyopadhyay}{dibyanayan@gmail.com}
  \icmlcorrespondingauthor{Asif Ekbal}{asif@iitp.ac.in}

  \icmlkeywords{Machine Learning, ICML}

  \vskip 0.3in
]



\printAffiliationsAndNotice{}  

\begin{abstract}
   Sparse autoencoders (SAEs) are increasingly used to extract interpretable features from language models (LMs), yet a central question remains: when can an SAE-based explanation be treated as a faithful view of an underlying frozen LM? We study this through a post-hoc generalization framework that certifies the LM via a sparse proxy, obtained by replacing a native hidden activation with its pretrained SAE reconstruction. Our framework derives an upper bound on the base model’s expected risk using four measurable quantities: proxy risk, SAE reconstruction gap, concept-pool mismatch, and sparse complexity. We interpret this certificate as an operational criterion for explanatory faithfulness. In particular, a non-vacuous bound indicates that the extracted sparse features retain meaningful predictive information, while small reconstruction and mismatch errors indicate that the proxy remains behaviorally close to the original model. Empirically, we show that the bound becomes non-vacuous on GPT-2 Small, Gemma-2B, and Llama-3-8B at practical sample sizes. A detailed layerwise analysis of Llama-3-8B reveals a strong depth dependence, with later layers becoming much easier to certify, associated with both stronger local fidelity and weaker downstream error amplification. Finally, through feature-shuffling ablations, we show that the decomposition distinguishes genuine semantic alignment from mere statistical sparsity, providing a useful diagnostic for when SAE-based explanations become less reliable.
\end{abstract}

\section{Introduction}
\label{sec:intro}

Sparse autoencoders (SAEs) \citep{cunningham2023sparseautoencodershighlyinterpretable} are increasingly used to interpret frozen language models (LMs) through sparse, human-inspectable features, raising a foundational question: \emph{when should an SAE-based explanation be trusted as a faithful lens on the underlying LM?} We study this question through a post-hoc certification framework that asks whether the sparse structure induced by a pretrained SAE can yield a \emph{non-vacuous generalization bound for the frozen LM itself}, rather than serving only as an informal interpretive tool. Concretely, we replace the native hidden activation of a frozen LM at a chosen layer with its SAE reconstruction while leaving all downstream layers unchanged, thereby obtaining an SAE-induced \emph{sparse proxy}. We show that the true risk of the original LM can be upper-bounded by four measurable proxy-induced quantities: (i) the empirical risk of the sparse proxy, (ii) the SAE reconstruction-induced approximation error, (iii) the probability that a chosen active concept pool fails to cover the features required for prediction, and (iv) a sparse complexity term governed by the active feature-pool size rather than the full LM parameter count. This decomposition makes the trust criterion explicit: the SAE-induced proxy can be treated as a reliable interpretive lens only when the resulting bound is non-vacuous and the reconstruction and pool-mismatch terms remain small, indicating that the proxy is both informative about the frozen model and behaviorally close to it.


This leads to the main viewpoint of the paper. Although the certified object is the frozen LM, the certificate is valuable precisely because it is expressed through an SAE-induced sparse proxy. We therefore use \emph{faithfulness} in a strict operational sense: the sparse proxy must be informative enough to certify that the frozen model is non-trivial relative to an uninformed baseline, while remaining behaviorally close to the original network's outputs. Under this view, \emph{trust} is the conjunction of usefulness and behavioral faithfulness. While this does not claim that the active SAE features constitute a complete semantic or causal explanation of the model, it does establish that the sparse proxy is sufficiently informative and low-distortion to support reliable interpretation. Appendix \ref{trust-elaborated} makes this operational by linking the non-vacuousness of the certificate to the former, and the reconstruction and pool-mismatch terms to the latter.

Our experiments support this operational perspective in practice. Across GPT-2 Small, Gemma-2B, and Llama-3-8B, the resulting certificates become non-vacuous at practical sample sizes. We then present a layerwise case study on Llama-3-8B, where certifiability varies sharply with patch location: later layers are substantially easier to certify than early and middle ones. To understand this effect, we separate local SAE reconstruction quality from downstream error propagation and find that later-layer proxies exhibit both stronger local alignment and weaker error amplification. Qualitatively, tighter late-layer certificates are accompanied by SAE features whose logit-lens verbalizations are more contextually aligned with the model’s next-token behavior. Complementary GPT-2 Small results in Appendix \ref{app:gpt2_layers} show much weaker layer sensitivity, indicating that the strength of this depth effect is model-specific rather than universal. We therefore use Llama-3-8B as a diagnostically informative case study of when and why patch location matters.



Taken together, the paper contributes both a principled post-hoc trust criterion for SAE-based explanation of frozen language models and an empirical analysis showing when that criterion is most and least informative in practice.

\paragraph{Contributions.}
Our main contributions are as follows:
(i) We introduce a post-hoc certification framework for frozen LMs in which a pretrained SAE defines a sparse proxy at a chosen hidden layer, and we derive a risk bound for the frozen model that decomposes into four measurable terms: proxy risk, reconstruction gap, concept-pool mismatch, and sparse complexity.

(ii) We show that the certificate becomes non-vacuous on GPT-2 Small, Gemma-2B, and Llama-3-8B at practical sample sizes.

(iii) We perform a layerwise and horizon-conditioned case study on Llama-3-8B to analyze a setting where patch location materially affects certification. This analysis shows that later layers are easier to certify and are associated with both stronger local fidelity and weaker downstream error amplification; complementary GPT-2 results in Appendix \ref{app:gpt2_layers} show that such depth dependence is not universal across model--SAE pairs. We make the code available at \url{https://github.com/newcodevelop/SAE-Faithfulness}.

\section{Related Work}


\subsection{Sparse autoencoders and interpretable features}
Sparse autoencoders (SAEs) are now a standard tool for probing hidden representations in large language models. They are motivated by the observations of superposition and polysemanticity: a model may represent many more features than there are neurons, and individual neurons may mix unrelated concepts \citep{elhage2022toymodelssuperposition}. SAEs address this by learning an over-complete but sparse feature basis, often yielding features that are easier to interpret than native activations \citep{bricken2023monosemanticity, cunningham2023sparseautoencodershighlyinterpretable}. We use this machinery differently from most prior work. Rather than using SAEs primarily for qualitative analysis, we treat them as a device for defining a finite proxy class and a representation-level complexity measure.

\subsection{Generalization bounds for large language models}
The empirical success of heavily over-parameterized models has exposed the limitations of classical uniform-convergence intuition \citep{zhang2017understandingdeeplearningrequires, nagarajan2021uniformconvergenceunableexplain}. Recent work has developed non-vacuous bounds for language models using bounded losses, PAC-Bayes or compression-based arguments, and data-aware analyses \citep{dziugaite2017computingnonvacuousgeneralizationbounds, lotfi2024nonvacuousgeneralizationboundslarge, lotfi2024unlockingtokensdatapoints}. Our paper is closest in spirit to this line, but differs in what is being compressed. We do not compress model weights; instead, we certify a frozen predictor through a sparse feature pool derived from internal activations.

\subsection{Compression, description length, and structural explanations}
Compression-based explanations of generalization are closely related to Occam-style bounds and minimum description length principles \citep{RISSANEN1978465, BLUMER1987377, arora2018strongergeneralizationboundsdeep}. The key idea is that generalization can sometimes be explained by a concise description of the learned function, even when the overall parameter count is large. Our contribution fits this perspective, but with an emphasis on \emph{structural interpretability}. The resulting certificate is not presented as the tightest possible post-hoc risk bound; rather, it aims to expose a small set of interpretable ingredients---concept-pool size, reconstruction error, and pool mismatch---that make the bound informative.

\section{Preliminaries}\label{prelim}

We first define the SAE notation, then state the post-hoc certification protocol, and finally formalize the proposed bound. 

\subsection{Sparse Autoencoder (SAE)}
We analyze a base LM, denoted as $M$, which maps an input $x$ to a high-dimensional hidden representation $h(x) \in \mathbb{R}^d$ at a specific layer. To interpret this dense representation, we utilize a Sparse Autoencoder (SAE) $S$, consisting of an encoder $S_E: \mathbb{R}^d \rightarrow \mathbb{R}^m$ and a decoder $S_D: \mathbb{R}^m \rightarrow \mathbb{R}^d$, where the dictionary size $m$ is typically much larger than the model width $d$ ($m \gg d$). The SAE decomposes the activation into a sparse set of interpretable features via the following operations:
i) \textit{Encoding:} The dense hidden state is projected to a pre-activation feature vector $a(x) := S_E(h(x)) \in \mathbb{R}^m$. ii) \textit{Sparsification:} We apply a non-linear $TopK$ operator, which retains the $k$ coefficients with the largest magnitudes and sets the rest to zero. This yields the \textbf{interpretable} sparse code $c(x) := TopK(a(x))$. iii) \textit{Reconstruction:} The sparse code is mapped back to the original activation space to produce the approximate hidden state $\hat{h}(x) := S_D(c(x))$.

\paragraph{Proxy predictor.}
The proxy predictor $S\circ M$ is obtained by feeding $\hat{h}(x)$ into the downstream part of $M$
(from the insertion layer onward) to produce a predictive distribution over outputs.
We write $(S\circ M)(x)$ for the resulting predictive distribution.


\subsection{Overview of the Theoretical Approach}
The goal of Section \ref{sec:problem} is to derive a generalization certificate for the base model $M$ using the proxy predictor $S\circ M$. 

Our analysis proceeds in two phases:
i) \textbf{Phase 1 (Freezing):} The base model $M$ and the SAE components ($S_E, S_D$) are pre-trained and fixed. For the purpose of our theorem, they are treated as frozen oracles, not as variable hypotheses.
ii) \textbf{Phase 2 (Certification):} On a held-out calibration stream, we construct a concept
pool \(G^*\) from the union of observed Top-k SAE supports and use its size \(P :=
|G^*|\) as the complexity measure instead of the raw parameter count of the base model $M$. This in turn makes the bound non-vacuous even with practical sample sizes.




\section{Problem Definition}\label{sec:problem}


\subsection{Risk Formulation}\label{risk-formulation}
Let $\mathcal{X}$ be the input space and $\mathcal{D}$ be an unknown distribution over $\mathcal{X}$. In the language modeling setting, we take a sample to be a token sequence $x = x_{1:T}$. We define the population risk as: $\mathcal{R}(M) := \mathbb{E}_{x_{1:T} \sim \mathcal{D}}\big[\ell(M, x_{1:T})\big]$
and, given $N$ i.i.d. samples $\{x^{(i)}_{1:T}\}_{i=1}^N$, the empirical risk is defined as:
$
    \hat{\mathcal{R}}(M) := \frac{1}{N}\sum_{i=1}^N \ell\big(M, x^{(i)}_{1:T}\big)
$
Note that the token sequences ($\{x^{(i)}_{1:T}\}_{i=1}^N$) must be i.i.d samples for the bound to hold. For that, we break the sequences into a contiguous set of tokens and then sample the sequences uniformly randomly from the dataset. This is the approach also used by \citet{lotfi2024nonvacuousgeneralizationboundslarge}.

\subsection{The Sparse Autoencoder (SAE) based Generalization Framework}
To formalize the complexity of $M$, we introduce a \textbf{Sparse Autoencoder (SAE)} probe, denoted as $S$. The proxy predictor $S \circ M$ is defined by replacing the internal activation $h_t = M(x_{<t})$ with its SAE reconstruction $\tilde h_t = S(h_t) = S(M(x_{<t}))$, which is then fed through the same downstream layers to produce smoothed proxy probabilities $\tilde p_{S\circ M}$.

\begin{definition}[\textbf{Sparse Autoencoder Class}]

Let \(\mathcal{H}_{k,m}\) be the
class of functions realizable by an SAE with dictionary \(W \in \mathbb{R}^{d \times m}\)
(with unit-norm columns) and sparsity constraint \(k\). For any input \(x\), the output is
\(S(x) = W \cdot c(x)\), where $\|c(x)\|_0 \le k$. The SAE effectively compresses the dense activation $M(x)$ into a sparse code $c$.

\end{definition}

\begin{definition}[\textbf{Reconstruction Inefficiency}]
We define a \emph{loss-level} reconstruction gap $\epsilon_{loss}$ as the expected discrepancy in loss between the original predictor and the proxy predictor:
\begin{equation}
    \epsilon_{loss} = \mathbb{E}_{x_{1:T} \sim \mathcal{D}}\big[\,|\ell(M, x_{1:T}) - \ell(S \circ M, x_{1:T})|\,\big]
\end{equation}
\end{definition}

To rigorously bound the complexity, we formalize the hypothesis space of the sparse proxy.

\subsection{Setup and notation}
\label{sec:setup}

\paragraph{Loss and risk.}

In the language modeling setting, let the model induce next-token probabilities $p_M(\cdot\mid x_{<t})$ over a vocabulary of size $V$. The standard bits-per-dimension (BPD) loss is:
\begin{equation}
    \ell_{\mathrm{bpd}}(M, x_{1:T}) := -\frac{1}{T}\sum_{t=1}^{T} \log p_M(x_t\mid x_{<t})
\end{equation}
Since $\ell_{\mathrm{bpd}}$ is unbounded when $p_M(x_t\mid x_{<t})$ can be arbitrarily small, we use \emph{prediction smoothing}. For a fixed $\alpha \in (0,1)$, as first proposed and defined in \cite{lotfi2024nonvacuousgeneralizationboundslarge}:
\begin{equation}
    \tilde p_M(\cdot\mid x_{<t}) := (1-\alpha) p_M(\cdot\mid x_{<t}) + \alpha/V
\end{equation}
We then define the smoothed BPD loss:
\begin{equation}
    \ell(M, x_{1:T}) := -\frac{1}{T}\sum_{t=1}^{T} \log \tilde p_M(x_t\mid x_{<t})
\end{equation}

This loss is bounded because $\tilde p_M(x_t\mid x_{<t}) \ge \alpha/V$, hence $\log_2(V/\alpha) - \Delta\le \ell(M, x_{1:T}) \le \log_2(V/\alpha) =: B$, where $\Delta = \log_2 (1 + (1 - \alpha)V/\alpha)$. $V$ is the vocabulary size. For a rigorous derivation, check Appendix A.2 of \cite{lotfi2024nonvacuousgeneralizationboundslarge}.


Let $z_i=x^{(i)}_{1:T}$ be an i.i.d sequence. For a deterministic predictor $f$ (the LM), for $N$ i.i.d. samples $\{z_i\}_{i=1}^N$, define population and empirical risks
\begin{equation}
R(f) := \mathbb{E}_{z\sim\mathcal{D}}[\ell(f;z)];
\hspace{1.25mm}
\widehat{R}_S(f) := \frac{1}{N}\sum_{i=1}^N \ell(f;z_i)
\end{equation}

$S$ subscript is used to refer the sparse-proxy setting.

\paragraph{Approximation error between $M$ and $S\circ M$.}
Define the point-wise loss gap
\begin{equation}
\Delta_{\mathrm{loss}}(z) := \big|\ell(M;z)-\ell(S\circ M;z)\big| \in [0,\Delta]
\end{equation}
and its population and empirical means
\begin{equation}
\epsilon_{\mathrm{loss}} := \mathbb{E}_{z\sim\mathcal{D}}[\Delta_{\mathrm{loss}}(z)];
\hspace{1.55mm}
\widehat{\epsilon}_{\mathrm{loss}} := \frac{1}{N}\sum_{i=1}^N \Delta_{\mathrm{loss}}(z_i)
\end{equation}

\subsection{Concept-pool assumption and restricted proxy class}
\label{sec:pool}

For an index set $G\subseteq[m]:=\{1,\dots,m\}$, let $\mathbf{1}_G\in\{0,1\}^m$ denote its indicator vector.
Define the masked activation vector
\begin{equation}
a_G(x) := a(x)\odot \mathbf{1}_G
\end{equation}
$a_G(x)$ means the activation which is restricted to have non-zero dimensions in the indicated position of the index set $G$.
Define the pool-restricted code and reconstruction
\begin{equation}
c_G(x) := \mathrm{TopK}(a_G(x)),
\quad
\widehat{h}_G(x) := S_D(c_G(x))
\end{equation}
Let $h_G$ denote the predictor obtained by feeding $\widehat{h}_G(x)$ into the downstream layers of $M$,
analogously to $S\circ M$.
Thus $\{h_G\}$ is a family of proxy predictors indexed by pools $G$.

\paragraph{Top-$k$ support.}
Let $\mathrm{supp}(v):=\{j: v_j\neq 0\}$.
Define the top-$k$ support event with respect to a pool $G$:
\begin{equation}
E_G(x) := \Big\{\mathrm{supp}\big(\mathrm{TopK}(a(x))\big)\subseteq G\Big\}
\end{equation}

By the above argument, we can construct the existence of integers $P\in\{k,\dots,m\}$ (concept pool) and a subset $G^\star\subseteq[m]$ with $|G^\star|=P$ such that
\begin{equation}
\Pr_{x\sim\mathcal{D}}\big(E_{G^\star}(x)\big) \ge 1-\eta,
\end{equation}
for some $\eta\in[0,1]$.

We define the following restricted hypothesis class:
\begin{equation}
\mathcal{H}_P := \{h_G : G\subseteq[m],\ |G|=P\}
\end{equation}
Then $|\mathcal{H}_P|=\binom{m}{P}$. 

\setcounter{theorem}{0} 
\begin{theorem}[Generalization bound via compression (Occam) under a concept pool]
\label{thm:occam_pool}
Let $\delta\in(0,1)$ and define $\delta_1=\delta_2=\delta/2$.
Then with probability at least $1-\delta$ over $S\sim\mathcal{D}^N$,
\begin{equation}
\begin{aligned}
    R(M)
\le
\widehat{R}_S(h_{G^\star})
+
\widehat{\epsilon}_{\mathrm{loss}}
+
\eta B \\
+
B\sqrt{\frac{\log|\mathcal{H}_P|+\log(2/\delta)}{2N}} 
+
B\sqrt{\frac{\log(4/\delta)}{2N}}
\label{eq:occam_final}
\end{aligned}
\end{equation}
\end{theorem}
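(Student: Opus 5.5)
The plan is to split the base-model risk into a proxy part and a gap part, and to control each with a separate concentration argument at confidence $\delta/2$; a union bound then gives overall probability at least $1-\delta$. The starting point is the pointwise inequality $\ell(M;z)\le \ell(S\circ M;z)+\Delta_{\mathrm{loss}}(z)$. Taking expectations gives
\begin{equation*}
R(M)\le R(S\circ M)+\epsilon_{\mathrm{loss}} .
\end{equation*}
Next, I will relate $S\circ M$ to the pool-restricted proxy $h_{G^\star}$. On the event $E_{G^\star}(x)$, the Top-$k$ support already lies in $G^\star$. Masking $a(x)$ by $\mathbf{1}_{G^\star}$ therefore leaves the top-$k$ coordinates unchanged, so $c_{G^\star}(x)=c(x)$ and $h_{G^\star}$ agrees with $S\circ M$ on $x$. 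Off the event, both losses lie in $[0,B]$ (indeed in $[B-\Delta,B]$), so the difference is at most $B$. Hence
\begin{equation*}
R(S\circ M)\le R(h_{G^\star})+\eta B .
\end{equation*}

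For the complexity term, I will apply Hoeffding's inequality to each fixed $h_G\in\mathcal{H}_P$, using the bounded loss range of width at most $B$. A union bound over the $\binom{m}{P}$ pools at level $\delta_1=\delta/2$ then gives, simultaneously for all $G$ with $|G|=P$,
\begin{equation*}
R(h_G)\le \widehat{R}_S(h_G)+B\sqrt{\frac{\log|\mathcal{H}_P|+\log(2/\delta)}{2N}} .
\end{equation*}
This bound holds in particular for $G^\star$. The point of the union bound is that $G^\star$ may be chosen after seeing the data, for example from the calibration supports; uniformity over $\mathcal{H}_P$ makes the bound valid for it anyway. This is the Occam step.

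For the gap term, $\Delta_{\mathrm{loss}}(z)\in[0,\Delta]\subseteq[0,B]$ is a fixed function of $z$, because $M$ and $S$ are frozen. A one-sided Hoeffding bound at level $\delta_2=\delta/2$ would give $\epsilon_{\mathrm{loss}}\le \widehat\epsilon_{\mathrm{loss}}+B\sqrt{\log(2/\delta)/(2N)}$. The stated term has $\log(4/\delta)$, which corresponds to a two-sided bound at level $\delta/2$; using it is harmless, since it is only looser. Summing the four inequalities and taking the union bound over the two failure events yields \eqref{eq:occam_final}.

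The main obstacle is the justification of the $\eta B$ step. The claim that $c_{G^\star}=c$ on $E_{G^\star}$ needs care with ties in TopK, and I would fix a deterministic tie-breaking rule shared by both operators. A second point is that $\eta$ is a population quantity. If $G^\star$ and $\eta$ are estimated from a stream, they must either come from data independent of the $N$ certification samples, or the assumption $\Pr(E_{G^\star})\ge 1-\eta$ must be taken as given, as the statement does. I would state this explicitly, treating $\eta$ as an assumed population-level coverage guarantee for the chosen $G^\star$.
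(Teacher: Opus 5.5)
Your proposal is correct and follows the paper's proof essentially step for step. The steps are the loss-gap decomposition $R(M)\le R(S\circ M)+\epsilon_{\mathrm{loss}}$, the $\eta B$ transfer via $c_{G^\star}=c$ on $E_{G^\star}$ (with fixed tie-breaking, as the paper also notes), the Occam union bound over $\mathcal{H}_P$, Hoeffding for $\epsilon_{\mathrm{loss}}$, and a union bound over the two $\delta/2$ failure events. Your $\delta_1,\delta_2$ labels are swapped relative to the paper, which is immaterial, and your remark that $\log(4/\delta)$ is a looser-than-needed (two-sided) rate is accurate, since the paper's concentration lemma for $\epsilon_{\mathrm{loss}}$ is stated in exactly that form.
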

\begin{proof}

\textit{Proof sketch. }The result follows by combining: (i) transfer from the base model to the unrestricted SAE proxy through the reconstruction gap; (ii) transfer from the unrestricted proxy to the pool-restricted proxy through the mismatch event; and (iii) an Occam bound over the finite class $\mathcal{H}_P$. The full proof is given in Appendix \ref{thm:occam_pool2}.

\end{proof}

\subsection{Measurement of Complexity Parameters}



Theorem \ref{thm:occam_pool} is stated for a pool $G^*$ and the associated concept pool $P$ and the pool penalty $\eta$. The same argument applies to any pool $G$ that is fixed before the evaluation draw, thanks to the union bound over all hypotheses inside hypothesis space $\mathcal{H}_{P}$. In our experiments, the instantiated pool $G^*$ is selected from a calibration stream disjoint from evaluation data.

In particular, given a calibration corpus $\mathcal{D}_{\mathrm{cal}}$ with $N_{\mathrm{cal}}$ examples, we define the pool 
\begin{equation}\label{empirical-pooling}
    {G^*} := \bigcup_{x \in \mathcal{D}_{\mathrm{cal}}} \mathrm{supp}\!\left(\mathrm{TopK}(S_E(M(x)))\right)
\end{equation}

Its size $P=|G^*|$ induces the complexity term $P\log(em/P)$ by Lemma \ref{lem:count}. The pool-mismatch quantity $\eta$ is likewise instantiated empirically for $G$ from the evaluation sample (i.e. $G^*$), with the corresponding finite-sample concentration term ($O(B/\sqrt{N})$) absorbed into the reported certificate.

In the empirical evaluation, we replace the population mismatch rate $\eta$ by its empirical estimate augmented with a finite-sample concentration term, namely:
$\eta \le \hat{\eta}+\sqrt{\frac{\log(2/\delta)}{2N}}$

This yields the following empirical certificate:

\begin{multline}\label{eq:empirical_occam_final}
R(M) \le \widehat{R}_S(h_{G^*}) + \widehat{\epsilon}_{\mathrm{loss}} + B\left(\hat{\eta}+\sqrt{\frac{\log(2/\delta)}{2N}}\right) \\
+ B\sqrt{\frac{P\log(em/P)+\log(2/\delta)}{2N}} + B\sqrt{\frac{\log(4/\delta)}{2N}}
\end{multline} 
All reported empirical certificates in the paper are computed using Eq.~\ref{eq:empirical_occam_final}.

Although the reconstruction-gap term rewards accurate SAE reconstructions, the certificate is not reduced to reconstruction quality alone: the pool-mismatch and sparse-complexity terms penalize non-transferable or overly diffuse sparse supports. Appendix~\ref{app:perfect_copy} formalizes why even a perfect unrestricted SAE copy does not automatically yield a non-vacuous certificate.

\paragraph{Constructing the pool-restricted predictor.}
To instantiate $h_{{G^*}}$, we run the base model up to the probed layer, encode the hidden state with the SAE, mask all features outside ${G^*}$, apply the Top-$k$ operator, decode the result, and resume the forward pass with the reconstructed activation. This yields the empirical risk $\widehat{R}_S(h_{{G^*}})$ and the mismatch statistics used in the bound.

\subsection{Discussion and scope}
\label{sec:discussion}

\paragraph{Why the finite proxy class is valid.}
The base model $M$ and the SAE are fixed before drawing the evaluation sample. Once these components are frozen, the remaining family of predictors is indexed only by feature pools $G \subseteq [m]$ of size $P$. This is what makes the Occam-style argument applicable: the relevant hypothesis class is the finite family $\mathcal{H}_P = \{h_G : |G|=P\}$, not the original parameter space of the language model.



\paragraph{What the theorem does and does not show.}
The theorem is a post-hoc guarantee for a trained predictor mediated by a sparse proxy. It does \emph{not} prove that sparse features emerge during training, nor that sparse semantics are the sole reason large language models generalize. The paper should therefore be read as providing a structural certification framework and associated diagnostics, not a complete theory of language-model learning.

Throughout the paper, we use \emph{trust} in a minimal risk-faithful sense: the certificate must show that the frozen model is informative relative to an uninformed baseline and that the SAE-induced proxy remains behaviorally close to it. The margin to the baseline captures usefulness, while $\epsilon_{\mathrm{loss}} + \eta B$ captures faithfulness of the pool-restricted sparse proxy (Appendix \ref{trust-elaborated}). Our claims therefore concern proxy informativeness and low distortion, not the semantic completeness or causal sufficiency of individual SAE features.


\section{Experiments}\label{sec:experiments}

Our experiments are organized around three questions: (Q1) Can the SAE-based certificate become non-vacuous at practical sample sizes?
(Q2) How does the bound behave when the SAE patching site differs across layers?
and (Q3) Do sparse statistics capture semantic structure rather than only raw sparsity counts?

We evaluate \texttt{GPT-2 Small} \citep{radford2019language}, \texttt{Gemma-2B} \citep{gemmateam2024gemma2improvingopen}, and \texttt{Llama-3-8B} \citep{grattafiori2024llama3herdmodels} alongside pretrained SAEs at a fixed internal layer. Experiments use English text from the C4 dataset \footnote{\url{https://huggingface.co/datasets/allenai/c4}}; full setup details are in Appendix \ref{exp_setup}.


\begin{figure}
    \centering
    \includegraphics[width=0.85\linewidth]{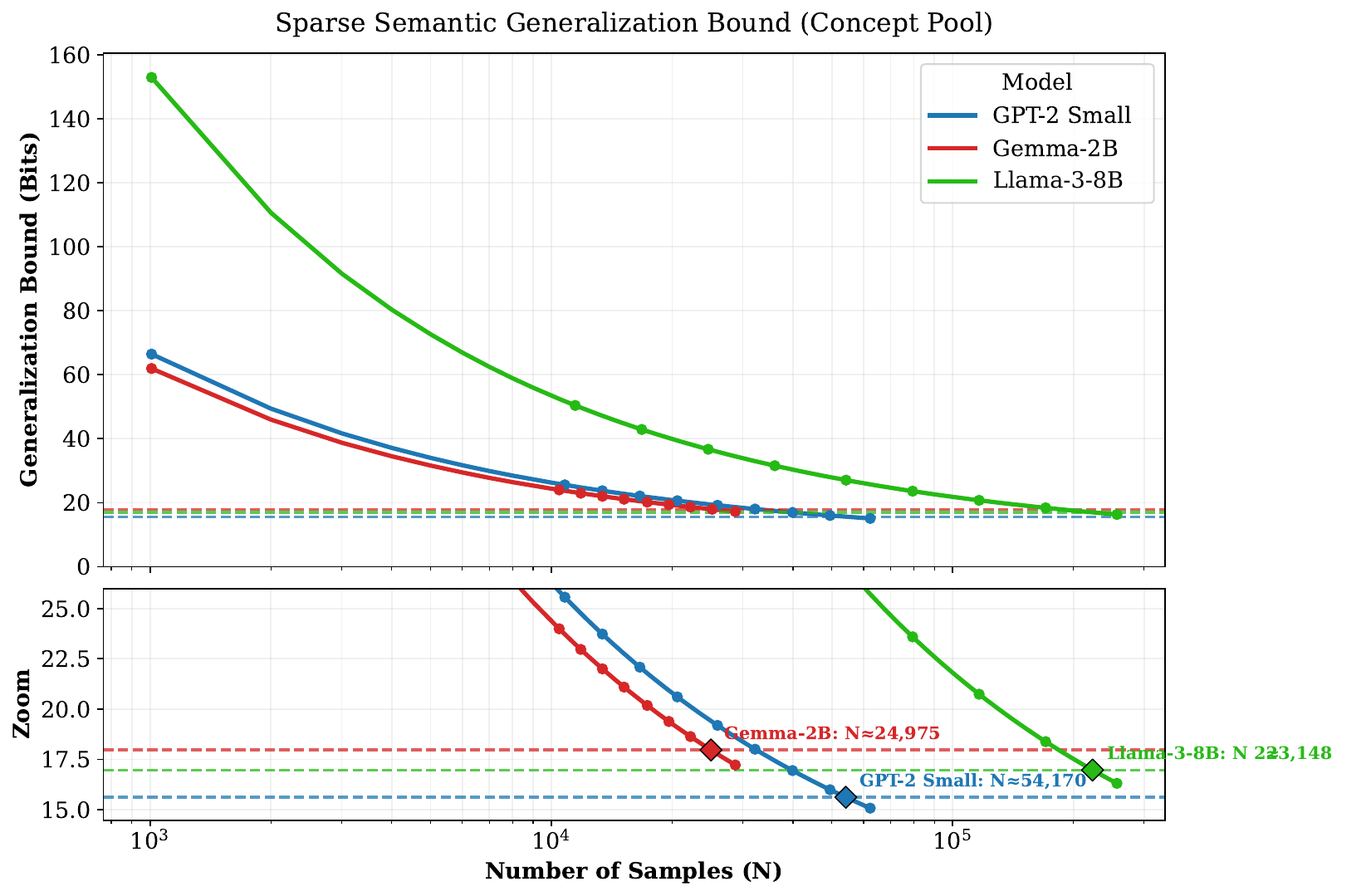}
  \caption{We plot the bound certificate (in bits) against evaluation sample size $N$ for GPT-2 Small (blue) and Gemma-2B (red) and Llama-3-8B. For all experiments $\alpha=0.5$}
    \label{fig:convergence}
\end{figure}



\subsection{Non-Vacuous Generalization Bounds}

We first ask whether the certificate in Eq.\eqref{eq:empirical_occam_final} becomes non-vacuous at realistic sample sizes. Unless otherwise noted, we use Top-k = 64 for
reconstruction, and the calibration pass uses 2.24M tokens (approximately 70k sequences of
length 32). Figure~\ref{fig:convergence} shows the bound as a function of $N$ with values in Table \ref{tab:generalization_bounds}. All three models cross below the random-guess baseline (which is $log_2(V/\alpha)$) at practical sample sizes. Gemma-2B reaches this regime earlier than GPT-2 Small, despite being much larger in parameter count. Because Llama has 8 billion parameters, this bound is only non-vacuous when using the sparse proxy. Achieving it also requires a larger sample size ($N$) compared to Gemma or GPT-2. This demonstrates the framework's core thesis: the relevant metric for complexity is the sparse proxy, not the raw parameter count.



\begin{figure*}[t]
    \centering
    \includegraphics[width=0.9\textwidth]{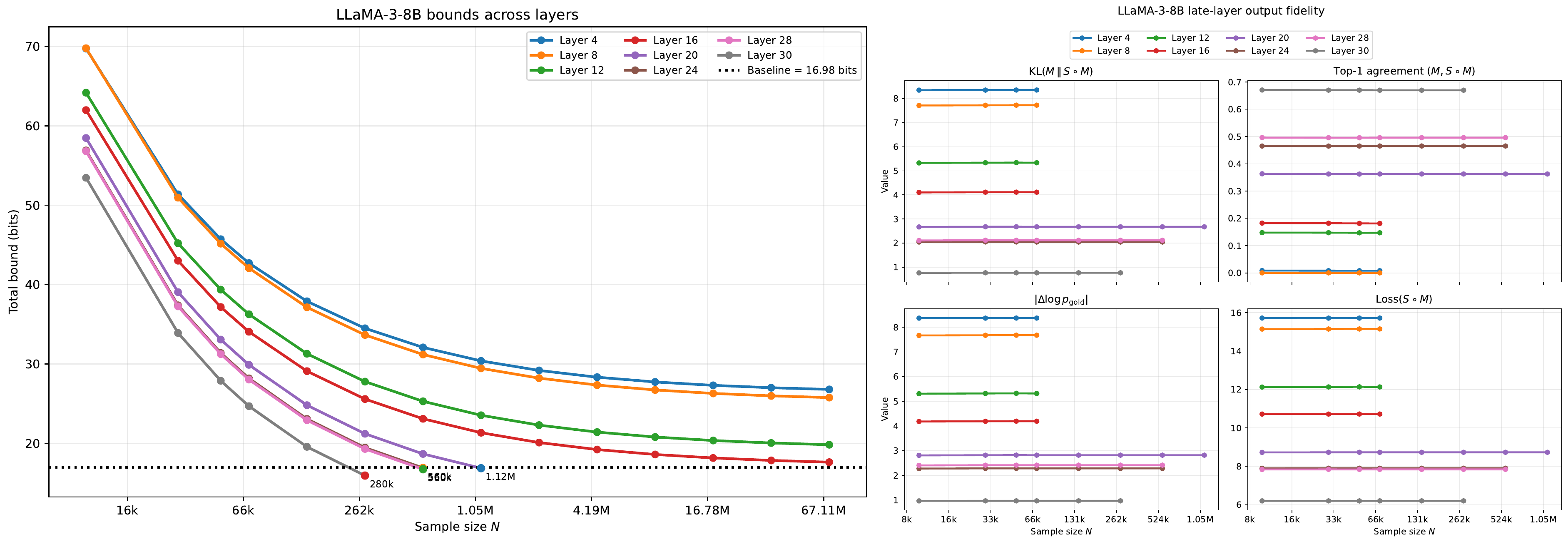}
    \caption{\textbf{Left:} Llama-3-8B observed bounds across layers. The horizontal dotted line marks the \(16.98\)-bit baseline. Layers \(20,24,28,\) and \(30\) eventually become non-vacuous with increasing samples, whereas layers \(4,8,12,\) and \(16\) do not, because their asymptotic floors remain above the baseline. \textbf{Right:} Output fidelity on Llama-3-8B as $N$ grows. The key pattern is that later layers preserve the base model's output behavior much more faithfully (also remains stable with increasing $N$), which explains the depth-wise drop in proxy risk and reconstruction gap.}
    \label{fig:llama_layerwise_bound_extrap}
\end{figure*}

    

\begin{table*}[t]
\centering
\caption{\textbf{Generalization Bounds and Crossing Sample Sizes.} We report the evaluated generalization bound (in bits) alongside the random guess log-loss baseline ($\log_2 (V/\alpha)$). The sample size $N$ represents the evaluation scale, while Crossing $N$ denotes sample size threshold where the bound becomes non-vacuous (drops below the baseline).}
\label{tab:generalization_bounds}
\resizebox{0.85\textwidth}{!}{%
\begin{tabular}{lcccccc}
\toprule
\textbf{Model} & \textbf{Sample Size} & \textbf{Active Pool} & \textbf{Empirical Risk} & \textbf{Bound} & \textbf{Baseline} & \textbf{Crossing Sample} \\
& ($N$) & ($P$) & ($\hat{R}_S(h_G)$) & (Bits) & (Bits) & (Crossing $N$) \\
\midrule
\textbf{GPT-2 Small} & 62,296 & 24,510 & 7.36 & \textbf{15.08} & 15.62 & 54,170 \\
\textbf{Gemma-2B} & 28,722 & 16,078 & 6.42 & \textbf{17.22} & 17.97 & 24,975 \\
\textbf{Llama-3-8B} & 256,620 & 130,701 & 6.21 & \textbf{16.31} & 16.98 & 223,148 \\
\bottomrule
\end{tabular}%
}
\end{table*}

\subsection{Layerwise certifiability: A case study on Llama-3-8B}
\label{subsec:llama_depth_cert}


We next ask whether the choice of patch layer materially affects certification in a model where layerwise variation is substantial. Llama-3-8B is a useful case study for this purpose because, unlike GPT-2 Small (Appendix \ref{app:gpt2_layers}), it exhibits strong heterogeneity across patch locations under the same certification protocol. We therefore use Llama-3-8B to analyze when layer choice matters, and why. Concretely, we evaluate pretrained SAEs at layers 
$\{4,8,12,16,20,24,28,30\}$ and measure the empirical proxy risk, reconstruction gap, pool-mismatch rate, active pool size, and resulting certificate at each layer.

A clear depth trend emerges. The certificate becomes progressively sharper as the patched layer moves later in the network. Early and middle layers remain highly vacuous because both the empirical proxy risk and the reconstruction-gap term are extremely large; indeed, for layers \(4\), \(8\), and \(12\), these two terms alone already exceed the random-guess baseline, which entails a vacuous bound even with infinite sample size. By contrast, later layers become substantially easier to certify, and the bound becomes non-vacuous after layer \(20\) with sufficient samples. Table~\ref{tab:llama_layerwise_bound_extrap} summarizes the layerwise bound parameter decomposition.

\begin{table*}[t]
\centering
\small
\setlength{\tabcolsep}{6pt}
\caption{Layerwise decomposition of the Llama-3-8B certificate. The baseline is \(16.98\) bits. Despite a monotonically increasing $P$, certifiability improves with depth; driven by minimized proxy risks and reconstruction gaps. \(N^\star\) denotes the sample size at which the bound becomes non-vacuous. \(\dagger\) ``Never'' means that even the asymptotic floor \(\hat R_{\mathcal{S}}(h_G)+\hat\epsilon_{\mathrm{loss}}+\hat\eta B\) remains above the baseline, leading to vacuous behaviour even with infinite samples.}
\label{tab:llama_layerwise_bound_extrap}
\adjustbox{width=0.85\textwidth}{\begin{tabular}{cccccccc}
\toprule
\textbf{Layer} & \(P\) & \(\hat R_{\mathcal{S}}(h_G)\) & \(\hat\epsilon_{\mathrm{loss}}\) & \(\hat\eta\)  & First non-vacuous \(N^\star\) & Bound@\({N^\star}\) & Asymptotic floor \\
\midrule
4  & 72,747  & 15.71 & 10.42 & 0.0088  & Never$^\dagger$ & --    & 26.29 \\
8  & 84,112  & 15.14 &  9.85 & 0.0144 & Never$^\dagger$ & --    & 25.25 \\
12 & 88,638  & 12.12 &  6.83 & 0.0201  & Never$^\dagger$ & --    & 19.30 \\
16 & 88,661  & 10.72 &  5.45 & 0.0517  & Never$^\dagger$ & --    & 17.10 \\
20 & 109,638 &  8.73 &  3.45 & 0.0197  & 1.12M           & 16.88 & 12.53 \\
24 & 119,717 &  7.90 &  2.61 & 0.0122  & 560k            & 16.91 & 10.74 \\
28 & 126,357 &  7.84 &  2.56 & 0.0086  & 560k            & 16.74 & 10.56 \\
30 & 130,701 &  6.21 &  0.92 & 0.0041  & 256k            & 16.31 &  7.20 \\
\bottomrule
\end{tabular}}
\end{table*}

The key observation is that the late-layer advantage is \emph{not} explained by a smaller complexity term. As Table~\ref{tab:llama_layerwise_bound_extrap} shows, the active concept-pool size \(P\) grows monotonically with depth and approaches near-saturation of the SAE dictionary in the latest layers. Consequently, the complexity does not shrink with depth; if anything, it becomes slightly larger. In particular, both the empirical proxy risk $\widehat{R}_S(h_{G^\star})$ and the reconstruction-gap term \(\hat\epsilon_{\mathrm{loss}}\) decrease sharply as we move to later layers. In other words, later layers are easier to certify not because they are combinatorially smaller, but because the sparse proxy preserves the model's predictive behavior much more faithfully there. Figure~\ref{fig:llama_layerwise_bound_extrap} (Left) visualizes this depth dependence directly. 


To understand this layer-wise effect more directly, we also measure \emph{output fidelity} between the original model \(M\), the unrestricted SAE proxy \(S \circ M\). The results are reported in  Figure~\ref{fig:llama_layerwise_bound_extrap} (Right) which visualizes the corresponding late-layer fidelity trends \footnote{the exact values are provided in Appendix Table \ref{tab:llama_layerwise_fidelity}}. Two observations stand out. First, output fidelity improves substantially from layer \(20\) to layer \(30\): KL divergence drops sharply, top-1 agreement rises from roughly \(36\%\) to \(67\%\), and the average absolute gold-token log-probability difference decreases by almost a factor of three. Second, the fidelity metrics for $M$ versus $S \circ M$ and $M$ versus $h_{G^*}$ are nearly identical across all evaluated layers (omitted from the figures, as the differences only appear at the second decimal place). This indicates that the dominant source of distortion is the SAE bottleneck itself rather than the additional restriction to the concept pool \(G^*\). In other words, once the SAE reconstruction is fixed, pruning to \(G^*\) incurs almost no further damage in these layers.

\textbf{Implication for the sparse lens.} This behaviour suggests that the informativeness of an SAE-based lens is strongly layer-dependent: patching at different layers does not yield the same degree of behavioral faithfulness or the same certificate tightness for the underlying LM.

\paragraph{Disentangling Representation Quality from Error Propagation.}
While our layerwise results demonstrate that later-layer SAE proxies are more trustworthy, in the sense that they yield tighter bounds and smaller proxy distortion, this alone does not identify the root cause of the loose bound seen for the earlier layers. A looser certificate at an early layer could reflect two distinct phenomena: either the local SAE representation is intrinsically insufficient, or a minor local reconstruction error is being drastically amplified by the deep downstream computation. To determine whether later layers offer superior semantic alignment or merely benefit from a shorter path to the final output, we introduce a horizon-conditioned rollout analysis.

Let $F_{\ell \rightarrow \ell+h}$ denote the subnetwork mapping the residual activation at layer $\ell$ to the activation $h$ blocks later.\footnote{When $\ell+h$ exceeds the network depth, we use the full remaining computation to the final next-token distribution.} For a patch layer $\ell$ and rollout horizon $h \geq 0$, we compare the intermediate activation distribution at layer $\ell+h$, obtained by feeding the base activation $h_\ell(x)$ versus the SAE reconstruction $\hat h_\ell(x)$ through $F_{\ell \rightarrow \ell+h}$. We measure the divergence:$$\mathrm{KL}^{(\ell,h)}(x)=\mathrm{KL}\!\left(p_{\mathrm{base}}^{(\ell,h)}(\cdot \mid x)\,\|\,p_{\mathrm{proxy}}^{(\ell,h)}(\cdot \mid x)\right)$$This isolates the two effects: matched short horizons ($h \in \{0,1\}$) probe the local representational fidelity of the SAE patch, while the growth of $\mathrm{KL}^{(\ell,h)}$ as $h$ increases quantifies how strongly that local mismatch is amplified by downstream computation.

\begin{figure}[h]
    \centering
    \includegraphics[width=\columnwidth]{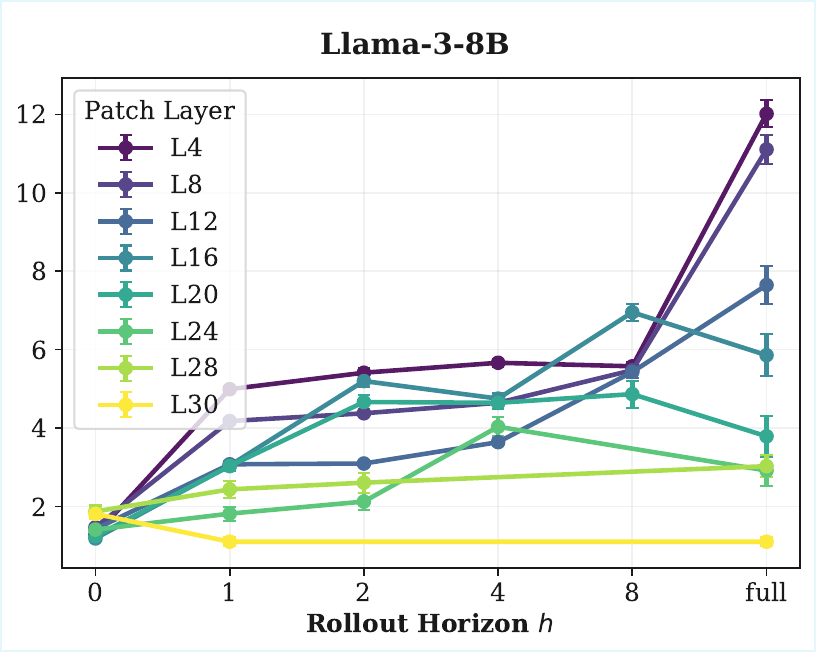}
    \caption{Horizon-conditioned Base-Proxy KL: Local Fidelity vs Downstream Error Accumulation}
    \label{fig:horizon-conditioned}
\end{figure}

\paragraph{Results: The Dual Advantage of Later Layers.}Figure \ref{fig:horizon-conditioned} reveals three distinct regimes for Llama-3-8B:

i) Early layers ($\ell=4, 8$): Exhibit nontrivial local distortion that undergoes severe amplification even after 1–2 rollout steps, ultimately reaching the largest full-horizon divergence ($>11$ bits).

ii) Middle layers ($\ell=12, 16, 20$): Show lower short-horizon divergence, but still suffer from substantial error accumulation over longer horizons.

iii) Late layers ($\ell=24, 28, 30$): Demonstrate both high local fidelity (low initial KL for $h \in \{0,1,2\}$) and remarkable stability, with divergence remaining flat or growing minimally.


Crucially, this shows the late-layer advantage is not merely a byproduct of a shorter computational path. If downstream length were the sole factor, short-horizon divergences would align across all layers. These results support a dual explanation for the late-layer advantage: later layers exhibit both better local proxy fidelity and weaker downstream amplification of reconstruction error. Late layers are not merely closer to the output; they are also locally better represented by the SAE and less sensitive to downstream amplification.

For completeness, the corresponding GPT-2 Small layerwise certificate curves are reported in Appendix~\ref{app:gpt2_layers}; unlike Llama-3-8B, they remain tightly clustered across depth, which entails that the layerwise performance pattern is not a universal property of every model but depends largely on how the model, SAE are trained. \textit{The contrast between Llama and GPT-2 illustrates the intended role of the framework: to diagnose when patch location is consequential and when it is not.}

\begin{figure*}[t]
    \centering
    \includegraphics[width=1.0\textwidth]{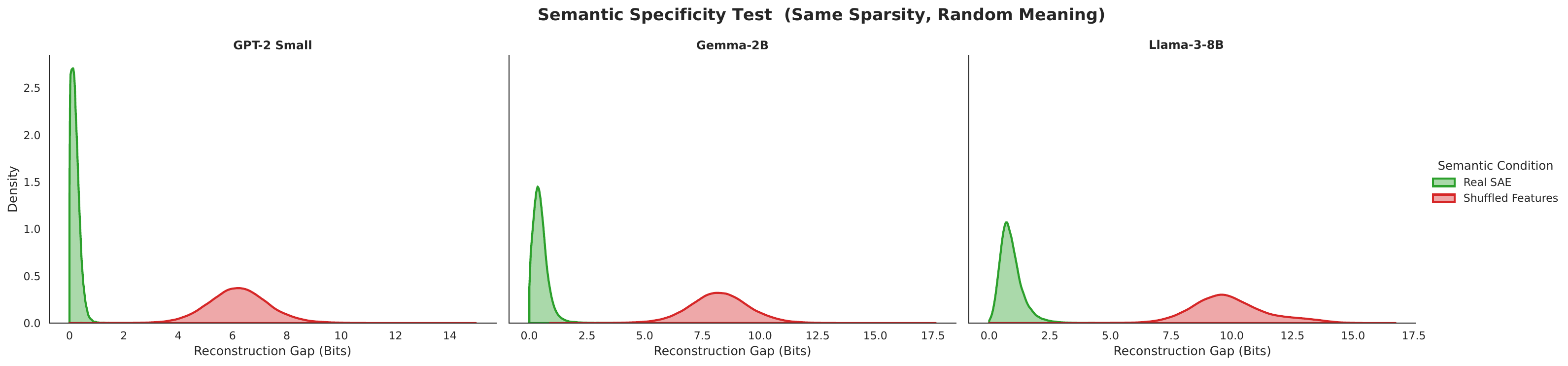}
    \caption{\textbf{Ablation: Semantic Specificity.} Density of the per-sequence reconstruction gap. \textbf{Green (Real SAE):} The baseline error is tightly clustered near 0 bits, indicating high semantic fidelity. \textbf{Red (Shuffled Features):} Randomly permuting the active feature indices—while strictly preserving the per-sample sparsity $k$ and activation magnitudes—drastically shifts the error distribution to the right (mean shifts of $\approx 6.5$, $8.5$, and $9.5$ bits for GPT-2 Small, Gemma-2B, and Llama-3-8B, respectively).}
    \label{fig:ablation_shuffled}
\end{figure*}

\subsection{Qualitative Case Study}\label{sec:qualitative_case_study}
We further inspect whether tighter late-layer certificates correspond to more interpretable active SAE features. For two deduction prompts, we compare an early layer (Layer~12) and a late layer (Layer~24) using the base--proxy next-token KL and feature-level logit-lens verbalizations of active SAE decoder directions. The early-layer proxy exhibits high KL divergence ($>7.5$), incorrect next-token predictions, and weakly contextual feature hints. In contrast, the late-layer proxy is much closer to the base model: KL drops sharply, predictions approach or match the target next token, and active features verbalize contextually relevant completions such as `silent, silence, quiet' and `sol, dissolve, soluble'. Full examples are provided in Appendix Section~\ref{sec:qualitative_case_study_app}. This supports the same pattern as the certificate: later layers yield sparse proxies that are both lower-distortion and more semantically aligned with the model's next-token behavior.

As an additional behavioral check, we evaluated the SAE-patched proxies on three downstream tasks (Appendix \ref{dtp}). The same layerwise pattern persists there: layers that are easier to certify also incur substantially smaller task-performance degradation. This does not validate the theorem directly, but it shows that the certification signal tracks practical behavioral fidelity beyond the bounded-loss analysis.

\subsection{Semantic Specificity vs.~Statistical Sparsity}

To assess whether sparse statistics capture semantic structure rather than only raw
sparsity counts, we perform a feature-shuffling ablation: for each input, we preserve the exact activation magnitudes and sparsity pattern ($k$) of the SAE code, but randomly permute the feature indices before decoding.

Figure~\ref{fig:ablation_shuffled} demonstrates that this purely semantic disruption catastrophically inflates the reconstruction gap across all three models. Because the combinatorial sparsity remains strictly identical under this intervention, the complexity term of our bound is completely unaffected. The resulting failure of the certificate is therefore driven exclusively by the empirical reconstruction gap. This confirms a crucial property of our framework: while the complexity term $P\log(em/P)$ rigorously bounds the size of the sparse candidate pool, the overall certificate remains highly sensitive to whether the chosen features are actually \emph{semantically aligned} with the target distribution. 

To evaluate the SAE proxy under distributional shift, Appendix \ref{iid-vs-noise} presents a synthetic token corruption study. Under random corruption, the asymptotic bound remains above the trivial baseline $\log_2(V/\alpha)$, rendering the certificate uninformative. Consequently, SAE-derived interpretations lack formal guarantees in this regime.

Thus, two proxies with identical sparsity can receive very different certificates, because the certificate depends not only on how many features are active, but also on whether the active feature directions carry the right predictive information.


\section{Conclusion}
\label{sec:conclusion}

We presented a post-hoc certification framework that upper-bounds the risk of a frozen LM through an SAE-induced sparse proxy. The resulting decomposition is useful not only because it can become non-vacuous in practical settings, but also because it localizes the sources of certification failure across proxy risk, reconstruction gap, concept-pool mismatch, and complexity. Empirically, certification is strongly layer-dependent, and later layers in Llama-3-8B are substantially easier to certify because they preserve the base model’s behavior more faithfully. We view this as a distribution-dependent, risk-level criterion for when an SAE-based proxy can be used as a cautious interpretive lens, not as a complete notion of explanation quality. More broadly, the framework provides a practical way to diagnose when sparse proxies succeed and when they should not be over-interpreted.


\bibliography{example_paper}
\bibliographystyle{icml2026}

\newpage
\appendix

\section{Useful bound on $B$ and $\Delta$} \label{blf}



Recall
\[
B := \log_2\!\left(\frac{V}{\alpha}\right),
\quad
\Delta := \log_2\!\left(1+\frac{(1-\alpha)V}{\alpha}\right).
\]
Then
\begin{multline}
    B-\Delta
=
\log_2\!\left(
\frac{V/\alpha}{1+(1-\alpha)V/\alpha}
\right)
\\=
\log_2\!\left(
\frac{V}{\alpha+(1-\alpha)V}
\right)
\end{multline}

Now,
\[
V \ge \alpha + (1-\alpha)V
\iff
\alpha V \ge \alpha
\iff
V \ge 1.
\]
Hence \(B-\Delta \ge 0\), so
\[
B \ge \Delta.
\]
Therefore, if \(l_A,l_B \in [B-\Delta,B]\), then both quantities lie in an interval of length \(\Delta\), which implies
\[
|l_A-l_B| \le \Delta.
\]
Thus,
\[
|l_A-l_B| \in [0,\Delta] \subseteq [0,B].
\]

\section{Auxiliary lemmas}
\label{sec:lemmas}

\begin{lemma}[Decomposition of risk via loss gap]
\label{lem:decomp}
For any two predictors $f,g$,
\begin{equation}
R(f) \le R(g) + \mathbb{E}_{z\sim\mathcal{D}}\big|\ell(f;z)-\ell(g;z)\big|
\end{equation}
In particular,
\begin{equation}
R(M) \le R(S\circ M) + \epsilon_{\mathrm{loss}}.
\end{equation}
\end{lemma}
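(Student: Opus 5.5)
The plan is to argue pointwise and then take expectations. For a fixed sample $z$, the loss of $f$ can be written as the loss of $g$ plus the difference of the two losses:
\[
\ell(f;z) = \ell(g;z) + \big(\ell(f;z)-\ell(g;z)\big).
\]
Since any real number is at most its absolute value, this gives $\ell(f;z) \le \ell(g;z) + |\ell(f;z)-\ell(g;z)|$ for every $z$.

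Next, take expectations under $z\sim\mathcal{D}$ and use monotonicity and linearity of expectation. The left side becomes $R(f)$ and the first term on the right becomes $R(g)$, which yields the first displayed inequality.

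Integrability is the only point to check. By the bound recalled in Section~\ref{sec:setup}, every smoothed loss lies in $[B-\Delta,B]$. Hence all three random variables $\ell(f;z)$, $\ell(g;z)$ and $|\ell(f;z)-\ell(g;z)|$ are bounded and measurable, so every expectation is finite and linearity applies. Appendix~\ref{blf} additionally places the gap in $[0,\Delta]$, although the argument only needs finiteness.

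For the specialization, take $f=M$ and $g=S\circ M$. The expected absolute gap is then exactly $\mathbb{E}_z[\Delta_{\mathrm{loss}}(z)]=\epsilon_{\mathrm{loss}}$ by the definition of $\epsilon_{\mathrm{loss}}$, which gives $R(M)\le R(S\circ M)+\epsilon_{\mathrm{loss}}$.

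There is no real obstacle here. The one step needing care is confirming that the predictors are deterministic and fixed (frozen), so the risks are well-defined expectations of bounded functions of $z$.
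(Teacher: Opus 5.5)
Your proof is correct and matches the paper's argument: the pointwise bound $\ell(f;z)\le\ell(g;z)+|\ell(f;z)-\ell(g;z)|$, followed by taking expectations over $z\sim\mathcal{D}$ and then specializing to $f=M$, $g=S\circ M$. Your integrability check via the bounded smoothed loss is a valid detail that the paper leaves implicit.
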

\begin{proof}
For every $z$, $\ell(f;z)\le \ell(g;z) + |\ell(f;z)-\ell(g;z)|$. Take expectation over $z\sim\mathcal{D}$.
\end{proof}

\begin{lemma}[Pool mismatch bound]
\label{lem:pool_mismatch}
Define
\begin{equation}
\epsilon_{\mathrm{pool}} := \mathbb{E}_{z\sim\mathcal{D}}\big|\ell(S\circ M;z)-\ell(h_{G^\star};z)\big|
\end{equation}
Then
\begin{equation}
\epsilon_{\mathrm{pool}} \le \eta B.
\end{equation}
\end{lemma}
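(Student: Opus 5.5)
The plan is to split the expectation defining $\epsilon_{\mathrm{pool}}$ according to the top-$k$ support event $E_{G^\star}(x)$. We bound the integrand by zero on $E_{G^\star}$ and by the range bound of Appendix~\ref{blf} on its complement. Concretely, write
\begin{equation*}
\epsilon_{\mathrm{pool}} = \mathbb{E}\big[\,|\ell(S\circ M;z)-\ell(h_{G^\star};z)|\,\mathbf{1}_{E_{G^\star}}\big] + \mathbb{E}\big[\,|\ell(S\circ M;z)-\ell(h_{G^\star};z)|\,\mathbf{1}_{E_{G^\star}^c}\big].
\end{equation*}
For the sequence-level setting, $E_{G^\star}$ should be read as the event that the top-$k$ support at every position of $z$ lies in $G^\star$. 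This is the reading under which $\Pr(E_{G^\star})\ge 1-\eta$ is meant, and it is how $\hat\eta$ is measured.

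The key step is to show that on $E_{G^\star}$ the two predictors coincide. Suppose $\mathrm{supp}(\mathrm{TopK}(a(x)))\subseteq G^\star$. The $k$ largest-magnitude coordinates of $a(x)$ then all lie in $G^\star$. Masking coordinates outside $G^\star$ leaves these coordinates unchanged and only zeroes out coordinates that were not among the top $k$. Hence these coordinates remain the $k$ largest in magnitude of $a_{G^\star}(x)$, and $c_{G^\star}(x)=\mathrm{TopK}(a_{G^\star}(x))=\mathrm{TopK}(a(x))=c(x)$. Applying the decoder gives $\widehat h_{G^\star}(x)=\hat h(x)$. Since the downstream layers are identical and deterministic, the smoothed next-token distributions agree at every position, so $\ell(S\circ M;z)=\ell(h_{G^\star};z)$ and the first term vanishes.

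The main obstacle is the behaviour of TopK under ties. If several coordinates share the $k$-th largest magnitude, the selected set may depend on the tie-breaking rule. I would handle this by fixing a deterministic tie-breaking rule by index, used identically in both computations. Masking preserves the relative order, and hence the tie-breaking order, of the surviving coordinates, so the same set is selected. A simpler alternative is to note that ties occur on a measure-zero set under continuous activations.

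For the second term, both losses lie in $[B-\Delta,B]$, so by Appendix~\ref{blf} their difference is at most $\Delta\le B$. This bounds the term by $B\Pr(E_{G^\star}^c)\le \eta B$, which gives $\epsilon_{\mathrm{pool}}\le\eta B$. One could even state the sharper bound $\eta\Delta$.
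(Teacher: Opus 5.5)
Your proposal is correct and follows essentially the same route as the paper. The paper likewise shows $c_{G^\star}(x)=c(x)$ on $E_{G^\star}(x)$ using deterministic tie-breaking of $\mathrm{TopK}$, bounds the loss difference by $B$ on the complement (using $\ell\in[B-\Delta,B]$ and $B\ge\Delta$), and takes expectations of the pointwise bound $|\ell(S\circ M;z)-\ell(h_{G^\star};z)|\le B\,\mathbf{1}\{E_{G^\star}(x)^c\}$. Your reading of the support event as holding at every position of the sequence (needed because a mismatch at one position propagates to later positions through attention) and your sharper $\eta\Delta$ bound are sound refinements of details the paper's proof glosses over.
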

\begin{proof}
Fix $x$. If $E_{G^\star}(x)$ holds, then $\mathrm{TopK}(a(x))$ has support contained in $G^\star$, hence
$a_{G^\star}(x)$ agrees with $a(x)$ on all coordinates that survive $\mathrm{TopK}$ and is zero elsewhere.
By determinism of $\mathrm{TopK}$ and the fixed tie-breaking, we obtain $c_{G^\star}(x)=c(x)$ and therefore
$\widehat{h}_{G^\star}(x)=\widehat{h}(x)$. Thus, $(S\circ M)(x)=h_{G^\star}(x)$, implying
$\ell(S\circ M;z)=\ell(h_{G^\star};z)$ for all labels $y$.
If $E_{G^\star}(x)$ fails, the absolute loss difference is safely lower than $B$, as $\ell \in [B-\Delta, B]$ and $B \ge \Delta$.
Therefore, for all $z=(x,y)$,
\begin{equation}
\big|\ell(S\circ M;z)-\ell(h_{G^\star};z)\big|
\le B\cdot \mathbf{1}\{E_{G^\star}(x)^c\}
\end{equation}
Taking expectation over $z\sim\mathcal{D}$ yields
\begin{equation}
\epsilon_{\mathrm{pool}}
\le
B\Pr_{x\sim\mathcal{D}}(E_{G^\star}(x)^c)
\le
\eta B
\end{equation}
\end{proof}

\begin{lemma}[Uniform convergence for finite classes (Occam bound)]
\label{lem:occam}
Let $\mathcal{H}$ be a finite set of predictors and assume $\ell(h;z)\in[0,B]$ for all $h\in\mathcal{H}$ and $z$.
Then for any $\delta\in(0,1)$, with probability at least $1-\delta$ over $S\sim\mathcal{D}^N$,
\begin{multline}
\forall h\in\mathcal{H}:\quad
R(h) \le \widehat{R}_S(h) + \\B\sqrt{\frac{\log|\mathcal{H}|+\log(1/\delta)}{2N}}
\end{multline}
\end{lemma}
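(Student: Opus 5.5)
The plan is the standard finite-class argument: Hoeffding's inequality for each fixed hypothesis, followed by a union bound over $\mathcal{H}$.

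\emph{Step 1 (single hypothesis).} Fix $h\in\mathcal{H}$. Since $h$ does not depend on the sample, the random variables $\ell(h;z_1),\dots,\ell(h;z_N)$ are i.i.d. and take values in $[0,B]$, each with mean $R(h)$. Hoeffding's one-sided inequality then gives, for every $t>0$,
\begin{equation}
\Pr_{S\sim\mathcal{D}^N}\big(R(h)-\widehat{R}_S(h) > t\big) \le \exp\!\left(-\frac{2Nt^2}{B^2}\right).
\end{equation}

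\emph{Step 2 (union bound).} Apply the union bound over the $|\mathcal{H}|$ hypotheses. The probability that some $h\in\mathcal{H}$ violates $R(h)\le \widehat{R}_S(h)+t$ is at most $|\mathcal{H}|\exp(-2Nt^2/B^2)$.

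\emph{Step 3 (solve for $t$).} Set this upper bound equal to $\delta$, which gives $t = B\sqrt{(\log|\mathcal{H}|+\log(1/\delta))/(2N)}$. On the complementary event, which has probability at least $1-\delta$, the inequality holds simultaneously for all $h\in\mathcal{H}$, as claimed.

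\emph{Main point requiring care.} The key requirement is that $\mathcal{H}$ is fixed before the sample is drawn. This holds in our setting because $M$ and the SAE are frozen, so each $h_G$ is a deterministic function indexed only by $G$. It is also what makes the bound valid for a data-dependent choice such as $G^\star$: the guarantee is uniform over $\mathcal{H}$, so it covers whichever $h$ is selected.

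The logarithm must be natural, matching Hoeffding's exponent. Hoeffding only needs the range width $b-a\le B$. Any bounded interval $[a,a+B]$ works, so the lemma also covers the smoothed loss with values in $[B-\Delta,B]$, whose width is $\Delta\le B$ by Appendix~\ref{blf}.
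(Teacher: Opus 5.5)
Your proposal is correct and follows the paper's proof exactly: Hoeffding's one-sided inequality for each fixed $h\in\mathcal{H}$ with range $[0,B]$, a union bound over $\mathcal{H}$, and solving $|\mathcal{H}|\exp(-2Nt^2/B^2)=\delta$ for $t$. Your side remarks also hold. The logarithm is natural. The smoothed loss lies in $[B-\Delta,B]\subseteq[0,B]$ by Appendix~\ref{blf}. The uniform guarantee covers $h_{G^\star}$ because $G^\star$, and hence $P$, is fixed by a calibration stream disjoint from the evaluation sample.
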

\begin{proof}
Fix $h\in\mathcal{H}$. By Hoeffding's inequality applied to i.i.d.\ variables $\ell(h;z_i)\in[0,B]$,
\begin{equation}
\Pr\!\left(R(h) - \widehat{R}_S(h) > t\right) \le \exp\!\left(-\frac{2Nt^2}{B^2}\right)
\end{equation}
Apply a union bound over all $h\in\mathcal{H}$ and set the right-hand side to $\delta$ to solve for $t$.
\end{proof}


\begin{lemma}[Counting pools]
\label{lem:count}
For $\mathcal{H}_P$ defined above, $|\mathcal{H}_P|=\binom{m}{P}$ and
\begin{equation}
\log|\mathcal{H}_P|
=
\log\binom{m}{P}
\le
P\log\!\left(\frac{em}{P}\right).
\end{equation}
\end{lemma}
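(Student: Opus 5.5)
The identity $|\mathcal{H}_P|=\binom{m}{P}$ holds by construction. $\mathcal{H}_P=\{h_G: G\subseteq[m],\ |G|=P\}$ is indexed by the $P$-element subsets of $[m]$. Each such subset determines exactly one predictor $h_G$, since $M$, $S_E$, $S_D$ and the tie-breaking rule of $\mathrm{TopK}$ are frozen. If two distinct pools happened to yield the same function, this would only make the count an overestimate, so $\log|\mathcal{H}_P|\le\log\binom{m}{P}$ holds in all cases, and this inequality is all that Theorem~\ref{thm:occam_pool} needs. So the real content of the lemma is the estimate $\log\binom{m}{P}\le P\log(em/P)$ for $1\le P\le m$, with natural logarithms.

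The plan for that estimate is the standard two-step chain. First, $\binom{m}{P}=\frac{m(m-1)\cdots(m-P+1)}{P!}\le \frac{m^P}{P!}$, since each of the $P$ factors in the numerator is at most $m$. Second, I lower-bound $P!$ by $(P/e)^P$. To get this, take the exponential series $e^P=\sum_{j\ge0}P^j/j!$, which has nonnegative terms, and keep only the term $j=P$. This gives $e^P\ge P^P/P!$, which rearranges to $P!\ge (P/e)^P$. Combining the two steps gives $\binom{m}{P}\le m^P e^P/P^P=(em/P)^P$, and taking logarithms yields the claim.

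There is no real obstacle. The only points to check are the convention that $\log$ is natural and the edge cases. The Hoeffding step in Lemma~\ref{lem:occam} uses the natural logarithm, so the bound in base $e$ is the one to plug into Eq.~\eqref{eq:empirical_occam_final}. For $P=m$ the bound reads $m\log e = m\ge 0=\log 1$, which is consistent. The case $P=0$ is excluded because $P\ge k\ge1$.
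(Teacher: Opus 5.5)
Your proposal is correct and follows the paper's proof: count the $P$-subsets of $[m]$, then apply $\binom{m}{P}\le(em/P)^P$, which the paper simply cites and you derive from $\binom{m}{P}\le m^P/P!$ and $P!\ge(P/e)^P$ (the latter via the exponential series). Your remark that coinciding predictors would only give $|\mathcal{H}_P|\le\binom{m}{P}$, which is all the Occam step needs, is a harmless refinement; the logarithm-base caveat is likewise unnecessary for the lemma itself, since taking a logarithm of any base preserves $\binom{m}{P}\le(em/P)^P$.
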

\begin{proof}
The cardinality is the number of $P$-subsets of $[m]$.
The inequality uses $\binom{m}{P}\le (em/P)^P$.
\end{proof}

\begin{lemma}[Concentration of $\epsilon_{\mathrm{loss}}$]
\label{lem:epsloss_hoeffding}
For any $\delta\in(0,1)$, with probability at least $1-\delta$,
\begin{equation}
\epsilon_{\mathrm{loss}}
\le
\widehat{\epsilon}_{\mathrm{loss}}
+
B\sqrt{\frac{\log(2/\delta)}{2N}}
\end{equation}
\end{lemma}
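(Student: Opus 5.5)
The plan is to apply Hoeffding's inequality once to the i.i.d. variables $\Delta_{\mathrm{loss}}(z_i)$, $i=1,\dots,N$. No union bound is involved, since the pair $(M, S\circ M)$ is frozen before the evaluation sample is drawn. Because $M$ and the SAE are fixed oracles (Phase~1), each $\Delta_{\mathrm{loss}}(z_i)=|\ell(M;z_i)-\ell(S\circ M;z_i)|$ is a deterministic measurable function of $z_i$ alone. Since the $z_i$ are i.i.d.\ from $\mathcal{D}$, the summands are i.i.d.\ with mean $\epsilon_{\mathrm{loss}}$, and their empirical average is exactly $\widehat{\epsilon}_{\mathrm{loss}}$.

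The first step is boundedness. Both $\ell(M;z)$ and $\ell(S\circ M;z)$ lie in $[B-\Delta,B]$, because both are smoothed BPD losses with the same $\alpha$ and $V$. By the computation in Appendix~\ref{blf}, this gives $\Delta_{\mathrm{loss}}(z)\in[0,\Delta]\subseteq[0,B]$. So the summands take values in an interval of length at most $B$.

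The second step is the one-sided Hoeffding inequality:
\[
\Pr\bigl(\epsilon_{\mathrm{loss}}-\widehat{\epsilon}_{\mathrm{loss}}>t\bigr)\le \exp\!\left(-\tfrac{2Nt^2}{B^2}\right).
\]
Setting the right-hand side equal to $\delta$ gives $t=B\sqrt{\log(1/\delta)/(2N)}$. Since $\log(1/\delta)\le\log(2/\delta)$, the stated deviation $B\sqrt{\log(2/\delta)/(2N)}$ is larger and therefore also holds with probability at least $1-\delta$. Alternatively, the stated form follows directly from the two-sided Hoeffding bound, which controls $|\epsilon_{\mathrm{loss}}-\widehat{\epsilon}_{\mathrm{loss}}|$ with failure probability $2\exp(-2Nt^2/B^2)$; setting this to $\delta$ yields exactly $\log(2/\delta)$. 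I would present the two-sided version, since it matches the constant in the statement and also gives the lower deviation for free. Using the tighter range $\Delta$ in place of $B$ would give a sharper bound, but it is not needed.

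The only delicate point is justifying independence, not the concentration step itself. Independence requires that neither $S\circ M$ nor the pool depends on the evaluation sample. For $\epsilon_{\mathrm{loss}}$ the pool plays no role at all: only $M$ and the unrestricted proxy $S\circ M$ enter, and both are frozen before sampling. Once this is noted, the rest of the argument is routine.
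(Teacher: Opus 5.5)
Your proposal is correct and follows the paper's proof, which is a single application of Hoeffding's inequality to the i.i.d. variables $\Delta_{\mathrm{loss}}(z_i)\in[0,B]$. Your extra steps fill in what the paper leaves implicit: boundedness via $\ell\in[B-\Delta,B]$ with $\Delta\le B$, independence from freezing $M$ and $S\circ M$, and the observation that the two-sided form gives exactly the $\log(2/\delta)$ constant.
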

\begin{proof}
The variables $\Delta_{\mathrm{loss}}(z_i)\in[0,B]$ are i.i.d.\ and Hoeffding applies.
\end{proof}


\setcounter{theorem}{0} 
\begin{theorem}[Generalization bound via compression (Occam) under a concept pool]
\label{thm:occam_pool2}
Let $\delta\in(0,1)$ and define $\delta_1=\delta_2=\delta/2$.
Then with probability at least $1-\delta$ over $S\sim\mathcal{D}^N$,
\begin{equation}
\begin{aligned}
    R(M)
\le
\widehat{R}_S(h_{G^\star})
+
\widehat{\epsilon}_{\mathrm{loss}}
+
\eta B \\
+
B\sqrt{\frac{\log|\mathcal{H}_P|+\log(2/\delta)}{2N}} 
+
B\sqrt{\frac{\log(4/\delta)}{2N}}
\label{eq:occam_final}
\end{aligned}
\end{equation}
\end{theorem}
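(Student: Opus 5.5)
The plan is to chain the auxiliary lemmas of Appendix~\ref{sec:lemmas} along the path $M \to S\circ M \to h_{G^\star}$, then control the two population quantities that remain by concentration, splitting the failure probability $\delta$ between two events.

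\textbf{Step 1 (transfer to the unrestricted proxy).} Lemma~\ref{lem:decomp} with $f=M$ and $g=S\circ M$ gives
\[
R(M)\le R(S\circ M)+\epsilon_{\mathrm{loss}} .
\]

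\textbf{Step 2 (transfer to the pool-restricted proxy).} Apply Lemma~\ref{lem:decomp} again with $f=S\circ M$ and $g=h_{G^\star}$. This gives $R(S\circ M)\le R(h_{G^\star})+\epsilon_{\mathrm{pool}}$. Lemma~\ref{lem:pool_mismatch} then bounds $\epsilon_{\mathrm{pool}}\le \eta B$. Together these yield the deterministic inequality
\[
R(M)\le R(h_{G^\star})+\epsilon_{\mathrm{loss}}+\eta B .
\]
No sampling is involved yet, since $\eta$ is a population quantity tied to the fixed pool $G^\star$.

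\textbf{Step 3 (concentration).} Two probabilistic steps remain.

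\emph{Event $A$.} Because $h_{G^\star}\in\mathcal{H}_P$, Lemma~\ref{lem:occam} with confidence $\delta_1=\delta/2$ gives, with probability at least $1-\delta/2$,
\[
R(h_{G^\star})\le \widehat R_S(h_{G^\star})+B\sqrt{\tfrac{\log|\mathcal H_P|+\log(2/\delta)}{2N}} .
\]
Lemma~\ref{lem:occam} needs losses in $[0,B]$. This holds because $\ell\in[B-\Delta,B]$ and $B\ge\Delta$ by Appendix~\ref{blf}.

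\emph{Event $E$.} Lemma~\ref{lem:epsloss_hoeffding} with confidence $\delta_2=\delta/2$ gives, with probability at least $1-\delta/2$,
\[
\epsilon_{\mathrm{loss}}\le\widehat\epsilon_{\mathrm{loss}}+B\sqrt{\tfrac{\log(4/\delta)}{2N}} ,
\]
since $\log(2/\delta_2)=\log(4/\delta)$. Here $\Delta_{\mathrm{loss}}\in[0,\Delta]\subseteq[0,B]$.

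A union bound shows both events hold simultaneously with probability at least $1-\delta$. Substituting them into the inequality from Step~2 gives exactly the bound in Eq.~\eqref{eq:occam_final}.

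\textbf{Main obstacle.} The delicate point is the validity of the Occam step for $h_{G^\star}$. Lemma~\ref{lem:occam} is uniform over $\mathcal H_P$, so it applies to any member of the class. However, it requires $G^\star$ to be fixed independently of the evaluation sample $S$, or at least to lie in a class fixed before $S$ is drawn. Since $G^\star$ is built from a disjoint calibration stream, I would state this as the operative assumption, and note that the union bound makes the conclusion hold even if $G^\star$ were chosen from $\mathcal H_P$ after seeing $S$.

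A second, minor check concerns the argument of Lemma~\ref{lem:pool_mismatch}. Whenever the event $E_{G^\star}(x)$ holds, masking to $G^\star$ must leave the Top-$k$ output unchanged. This requires consistent tie-breaking, and also that $G^\star$ contains at least $k$ coordinates, which is guaranteed by $P\ge k$.
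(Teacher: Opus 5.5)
Your proposal is correct and follows the paper's proof essentially step for step. You form the deterministic chain $R(M)\le R(S\circ M)+\epsilon_{\mathrm{loss}}\le R(h_{G^\star})+\epsilon_{\mathrm{loss}}+\eta B$ via Lemmas~\ref{lem:decomp} and~\ref{lem:pool_mismatch}, apply the Occam bound (Lemma~\ref{lem:occam}) to $h_{G^\star}\in\mathcal{H}_P$ and Hoeffding (Lemma~\ref{lem:epsloss_hoeffding}) to $\epsilon_{\mathrm{loss}}$ at confidence $\delta/2$ each, and combine them with a union bound. Swapping the labels $\delta_1,\delta_2$ is immaterial since both equal $\delta/2$, and your remarks on the loss lying in $[0,B]$ and on fixing the pool before the evaluation draw are sound points that the paper leaves implicit.
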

\begin{proof}


By Lemma~\ref{lem:decomp}, $R(M)\le R(S\circ M)+\epsilon_{\mathrm{loss}}$.
By Lemma~\ref{lem:pool_mismatch}, $R(S\circ M)\le R(h_{G^\star})+\epsilon_{\mathrm{pool}}\le R(h_{G^\star})+\eta B$.
Apply Lemma~\ref{lem:occam} to $\mathcal{H}_P$ with confidence $\delta_2$ and evaluate at $h_{G^\star}\in\mathcal{H}_P$:
\begin{equation}
R(h_{G^\star}) \le \widehat{R}_S(h_{G^\star}) + B\sqrt{\frac{\log|\mathcal{H}_P|+\log(1/\delta_2)}{2N}}
\end{equation}
Apply Lemma~\ref{lem:epsloss_hoeffding} with confidence $\delta_1$:
\begin{equation}
\epsilon_{\mathrm{loss}} \le \widehat{\epsilon}_{\mathrm{loss}} + B\sqrt{\frac{\log(2/\delta_1)}{2N}}
\end{equation}
Combine the inequalities and use a union bound to obtain \eqref{eq:occam_final}.

\end{proof}

\section{Experimental Setup}
\label{exp_setup}

We evaluate three pretrained language models: GPT-2 Small, Gemma-2B, and Llama-3-8B. For each model, we insert a pretrained SAE at a fixed internal layer, replace the native hidden activation with its SAE reconstruction, and continue the forward pass through the unchanged downstream blocks. Unless otherwise noted, all experiments use English text from C4, tokenized into contiguous sequences of length $32$. We construct the empirical concept pool $\hat G$ from a calibration stream that is disjoint from the evaluation stream, so that the pool is fixed before the certificate is measured on held-out data. In our implementation, the calibration pass uses $2.24$M tokens, while the evaluation curves are obtained by varying the evaluation sample size $N$ on a separate stream.

\begin{table}[h]
\centering
\small
\caption{Global experimental settings shared across all model families.}
\label{tab:global_exp_setup}
\begin{tabularx}{0.9\linewidth}{L{0.40\linewidth}Y}
\toprule
\textbf{Item} & \textbf{Setting} \\
\midrule
Text source & C4 (English) \\
Sequence construction & Contiguous token sequences \\
Sequence length & $32$ tokens \\
Calibration stream & Disjoint from evaluation stream \\
Calibration budget & $2.24$M tokens \\
Evaluation sample size $N$ & Varied to generate certificate curves \\
Sparse coding rule & Top-$k$ masking of SAE pre-activations \\
Default sparsity & $k=64$ \\
Loss & Smoothed bits-per-dimension \\
Smoothing parameter & $\alpha = 0.5$ \\
Bounded-loss constant & $B=\log(V/\alpha)$ \\
Confidence level & $\delta = 0.05$ \\
Pool construction & Using Eqn. \ref{empirical-pooling} \\
Pool-restricted predictor & Encode $\rightarrow$ mask outside $\hat G$ $\rightarrow$ Top-$k$ $\rightarrow$ decode $\rightarrow$ resume downstream forward pass \\
\bottomrule
\end{tabularx}
\end{table}

Across all models, we use the same sparse reconstruction protocol. Given the SAE pre-activation vector, we retain the Top-$k$ coordinates with largest magnitude and set the rest to zero, with $k=64$ unless stated otherwise. The loss is the smoothed bits-per-dimension objective with smoothing parameter $\alpha=0.5$, yielding bounded-loss constant $B=\log(V/\alpha)$. We report certificates at confidence level $\delta=0.05$. Operationally, the pool-restricted predictor is constructed by running the base model up to the probed layer, encoding the hidden state with the SAE, masking all features outside $G^*$, applying Top-$k$, decoding back to activation space, and resuming the forward pass with the reconstructed hidden state.

The global settings shared across all experiments are
summarized in Table~\ref{tab:global_exp_setup}, and the model-specific choices are summarized in Table~\ref{tab:model_exp_setup}. For the
main certificate curves, we use one primary publicly available SAE checkpoint per model
family. The layerwise sweeps in Section \ref{subsec:llama_depth_cert} and Appendix \ref{app:gpt2_layers} use additional layer-specific
checkpoints where needed.

\begin{table}[h]
\centering
\small
\caption{Model-specific setup. Listed checkpoints are the primary SAEs used in our experiments.}
\label{tab:model_exp_setup}
\begin{tabularx}{\linewidth}{l l X}
\toprule
\textbf{Model} & \textbf{SAE Release} & \textbf{Hookpoint} \\
\midrule

\textbf{GPT-2 Small} 
& \href{https://huggingface.co/jbloom/GPT2-Small-SAEs}{\makecell[l]{\texttt{gpt2-small-} \\ \texttt{res-jb}}} 
& L6: \texttt{resid\_pre} \\

\addlinespace[4pt]

\textbf{Gemma-2B} 
& \href{https://huggingface.co/jbloom/Gemma-2b-Residual-Stream-SAEs}{\makecell[l]{\texttt{gemma-2b-} \\ \texttt{res-jb}}} 
& L12: \texttt{resid\_post} \\

\addlinespace[4pt]

\textbf{Llama-3-8B} 
& \href{https://huggingface.co/EleutherAI/sae-llama-3-8b-32x}{\makecell[l]{\texttt{sae-llama-} \\ \texttt{...32x}}} 
& L30: \texttt{resid\_post} \\
\bottomrule
\end{tabularx}
\end{table}


\section{Downstream Task Preservation}\label{dtp}

\begin{table*}[t]
\centering
\scriptsize
\caption{Zero-shot downstream-task preservation under sparse semantic proxying. We report
the accuracy of the base model $M$, the pool-restricted proxy $h_G$, and the drop
$\Delta\mathrm{Acc}=\mathrm{Acc}(M)-\mathrm{Acc}(h_G)$ on WinoGrande, PIQA, and
HellaSwag. Later Llama layers preserve downstream behavior substantially better than
earlier layers.}
\label{tab:downstream_preservation}
\setlength{\tabcolsep}{4.5pt}
\begin{tabular}{lccccccccccc}
\toprule
& & & \multicolumn{3}{c}{\textbf{WinoGrande}} & \multicolumn{3}{c}{\textbf{PIQA}} & \multicolumn{3}{c}{\textbf{HellaSwag}} \\
\cmidrule(lr){4-6} \cmidrule(lr){7-9} \cmidrule(lr){10-12}
\textbf{Model} & \textbf{Layer} & \(\mathbf{P}\) & \(\mathrm{Acc}(M)\) & \(\mathrm{Acc}(h_G)\) & \(\Delta\)Acc & \(\mathrm{Acc}(M)\) & \(\mathrm{Acc}(h_G)\) & \(\Delta\)Acc & \(\mathrm{Acc}(M)\) & \(\mathrm{Acc}(h_G)\) & \(\Delta\)Acc \\
\midrule
GPT-2 Small & 6 & 24{,}510 & 0.510 & 0.517 & -0.007 & 0.614 & 0.604 & 0.010 & 0.343 & 0.333 & 0.010 \\
Gemma-2B & 12 & 16{,}078 & 0.619 & 0.555 & 0.064 & 0.776 & 0.710 & 0.066 & 0.547 & 0.455 & 0.092 \\
\midrule
Llama-3-8B & 4 & 72{,}747 & 0.705 & 0.503 & 0.202 & 0.804 & 0.487 & 0.317 & 0.634 & 0.273 & 0.361 \\
Llama-3-8B & 8 & 84{,}112 & 0.705 & 0.509 & 0.196 & 0.804 & 0.486 & 0.318 & 0.634 & 0.252 & 0.382 \\
Llama-3-8B & 12 & 88{,}638 & 0.705 & 0.526 & 0.179 & 0.804 & 0.502 & 0.302 & 0.634 & 0.306 & 0.328 \\
Llama-3-8B & 16 & 88{,}661 & 0.705 & 0.561 & 0.144 & 0.804 & 0.597 & 0.207 & 0.634 & 0.403 & 0.231 \\
Llama-3-8B & 20 & 109{,}638 & 0.705 & 0.560 & 0.145 & 0.804 & 0.635 & 0.169 & 0.634 & 0.441 & 0.193 \\
Llama-3-8B & 24 & 119{,}717 & 0.705 & 0.646 & 0.059 & 0.804 & 0.736 & 0.068 & 0.634 & 0.541 & 0.093 \\
Llama-3-8B & 28 & 126{,}357 & 0.705 & 0.592 & 0.113 & 0.804 & 0.762 & 0.042 & 0.634 & 0.562 & 0.072 \\
Llama-3-8B & 30 & 130{,}701 & 0.705 & 0.619 & 0.086 & 0.804 & 0.762 & 0.042 & 0.634 & 0.582 & 0.052 \\
\bottomrule
\end{tabular}
\end{table*}

Table~\ref{tab:downstream_preservation} examines whether the sparse semantic proxy also preserves the \emph{practical} task behavior of the frozen model on zero-shot downstream benchmarks. These are commonsense understanding tasks, namely: i) WinoGrande \citep{sakaguchi2019winograndeadversarialwinogradschema}, ii) PIQA \citep{bisk2019piqareasoningphysicalcommonsense}, and iii) HellaSwag \citep{zellers2019hellaswagmachinereallyfinish}. We use full test from the respective datasets.  The same layerwise pattern predicted by the certificate is clearly visible here: as we move to deeper layers, the proxy becomes behaviorally closer to the original model, and the task-level degradation decreases accordingly. In particular, at early Llama layers, where the certificate is vacuous, the proxy accuracy is often close to random guessing, whereas at later layers, where the certificate becomes non-vacuous and the certified risk decreases, downstream performance also improves substantially. For example, on HellaSwag, the accuracy drop decreases from $0.361$ at layer~4 to $0.052$ at layer~30; on PIQA, it decreases from $0.317$ to $0.042$; and on WinoGrande, it decreases from $0.202$ to $0.086$. Thus, the monotonic improvement suggested by the certificate is reflected not only in the bounded-loss analysis but also in downstream task accuracy. We emphasize that this experiment is not a proof of the theorem, but an additional behavioral check showing that the practical performance of the sparse proxy is consistent with the certificate.

\section{GPT-2 Small layerwise certificate analysis}
\label{app:gpt2_layers}

\begin{table*}[t]
\centering
\small
\setlength{\tabcolsep}{7pt}
\caption{Layer-wise output fidelity on Llama-3-8B. Fidelity improves substantially with depth.} 
\label{tab:llama_layerwise_fidelity}
\adjustbox{width=0.9\textwidth}{\begin{tabular}{ccccccc}
\toprule
\textbf{Layer} & \(P\) & KL\((M \,\|\, S\circ M)\) & Top-1 Agree.\((M, S\circ M)\) & \(|\Delta \log p_{\text{gold}}|\) & Loss\((M)\) & Loss\((S\circ M)\) \\
\midrule
4 & 72{,}747 & 8.353 & 0.008 & 8.372 & 5.289 & 15.718 \\
8 & 84{,}112 & 7.723 & 0.000 & 7.678 & 5.289 & 15.150 \\
12 & 88{,}638 & 5.335 & 0.147 & 5.318 & 5.289 & 12.134 \\
16 & 88{,}661 & 4.111 & 0.181 & 4.193 & 5.289 & 10.723 \\
20 & 109{,}638 & 2.675 & 0.363 & 2.819 & 5.288 & 8.731 \\
24 & 119{,}717 & 2.044 & 0.465 & 2.282 & 5.288 & 7.902 \\
28 & 126{,}357 & 2.115 & 0.496 & 2.414 & 5.288 & 7.843 \\
30 & 130{,}701 & 0.766 & 0.670 & 0.970 & 5.288 & 6.207 \\
\bottomrule
\end{tabular}}
\end{table*}

For completeness, we also performed a layerwise certificate sweep for GPT-2 Small across layers
$\{0,2,4,6,8,10\}$ in Figure \ref{fig:gpt-layerwise}. In contrast to the strong late-layer trend observed for Llama-3-8B in the
main text, GPT-2 exhibits only weak layer sensitivity: the bound curves remain tightly clustered
across depth and all reach non-vacuity at broadly similar sample scales. 


\begin{figure}[h]
    \centering
    \includegraphics[width=\columnwidth]{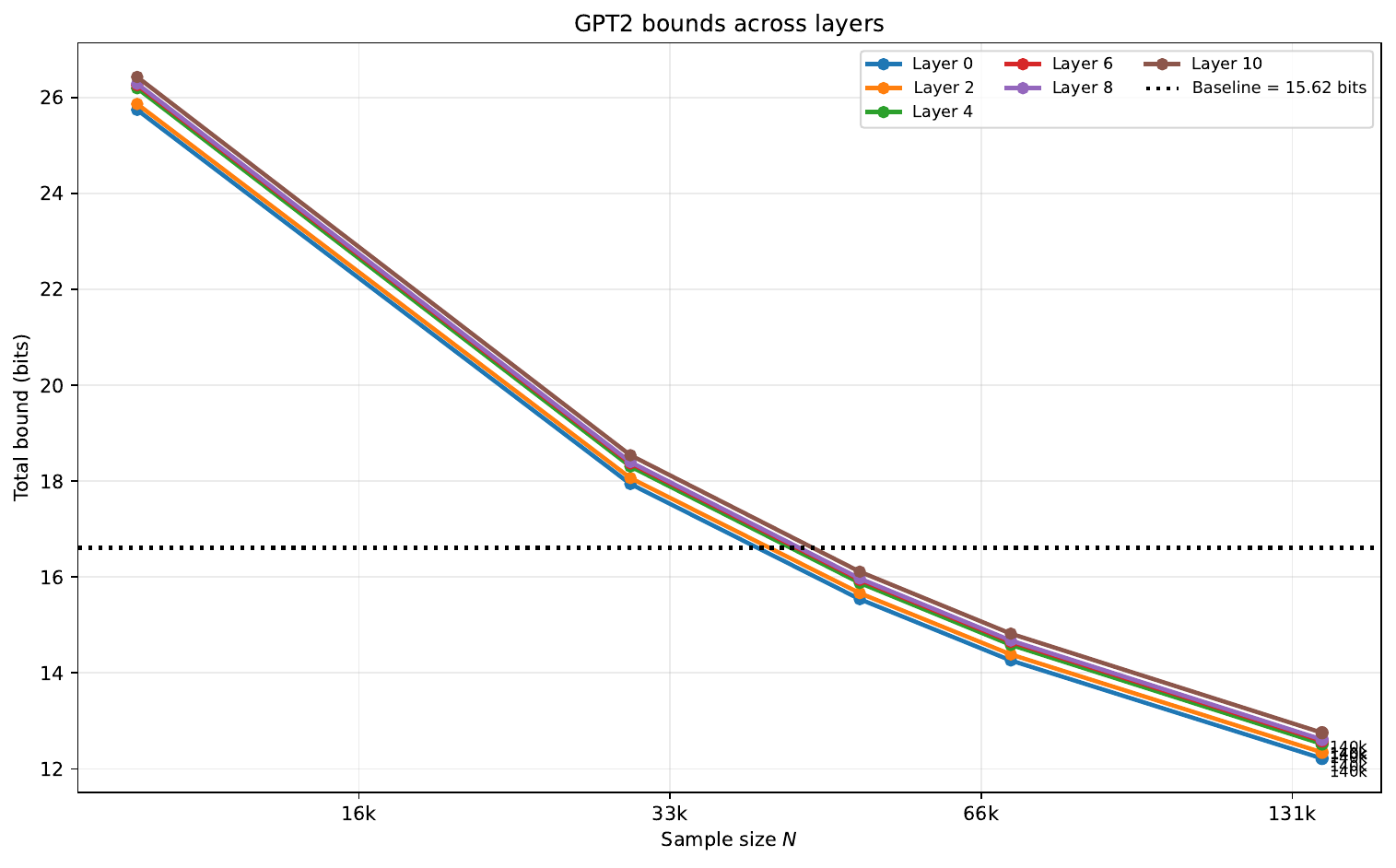}
    \caption{Layerwise analysis for GPT-2 small}
    \label{fig:gpt-layerwise}
\end{figure}

We therefore do not treat GPT-2 as a second depth-dependent case
study; instead, these results justify the use of layer~6 in the main experiments as a representative
midpoint rather than as a specially optimized choice. 

\section{Top-k sensitivity}

\begin{figure*}[t]
    \centering
    \includegraphics[width=\textwidth]{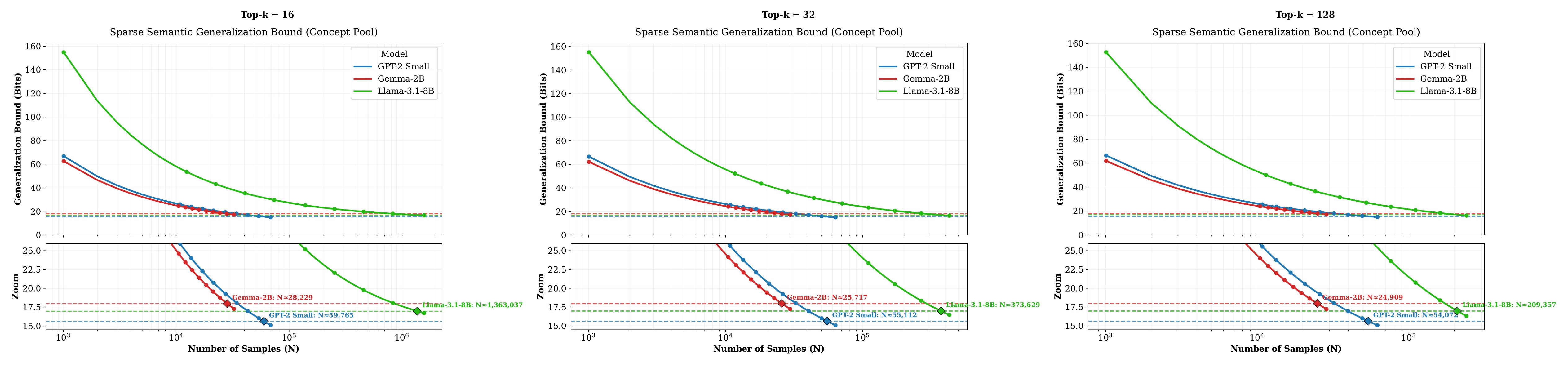}
    \caption{Sensitivity of the sparse semantic generalization certificate to the Top-$k$ sparsity level at fixed representative layers: GPT-2 Small (layer 6), Gemma-2B (layer 12), and Llama-3-8B (layer 30). Across $k \in \{16,32,64,128\}$, the qualitative ordering remains unchanged, while the sample size required for non-vacuity decreases as $k$ increases, most notably for Llama-3-8B.}
    \label{fig:topk_sensitivity}
\end{figure*}

Figure~\ref{fig:topk_sensitivity} shows that the qualitative behavior of the certificate is stable across a broad range of sparsity levels. In all three settings ($k \in \{16,32,128\}$), the same ordering is preserved\footnote{Note that $k=64$ is already studied in the main paper}: Gemma-2B reaches non-vacuity first, GPT-2 Small next, and Llama-3-8B last. Quantitatively, the required sample size for non-vacuity shifts smoothly with $k$: for $k=16$, the crossing points are approximately $N\!\approx\!28{,}229$ (Gemma-2B), $59{,}765$ (GPT-2 Small), and $1{,}363{,}037$ (Llama-3-8B); for $k=32$, they become $25{,}717$, $55{,}112$, and $373{,}629$; and for $k=128$, they further reduce to $24{,}909$, $54{,}072$, and $209{,}357$, respectively. Thus, increasing $k$ in this range does not alter the qualitative conclusion or the relative ranking of models, but it generally makes non-vacuity easier to attain, especially for Llama-3-8B. This suggests that the main phenomenon is robust to the sparsity threshold: changing $k$ primarily shifts the quantitative sample complexity of certification rather than the underlying trend itself.

\section{Behavior of the Certificate under Synthetic Input Corruption}\label{iid-vs-noise}

To examine how the certificate behaves when inputs depart from structured natural language, we perform a synthetic corruption study in which increasing fractions of tokens are replaced at random. We report the resulting decomposition of the certificate into empirical risk, reconstruction gap, pool mismatch, and complexity. The goal of this analysis is to understand which terms of the certificate degrade when the sparse proxy is evaluated on increasingly corrupted inputs.

\begin{figure*}[t]
    \centering
    \includegraphics[width=\textwidth]{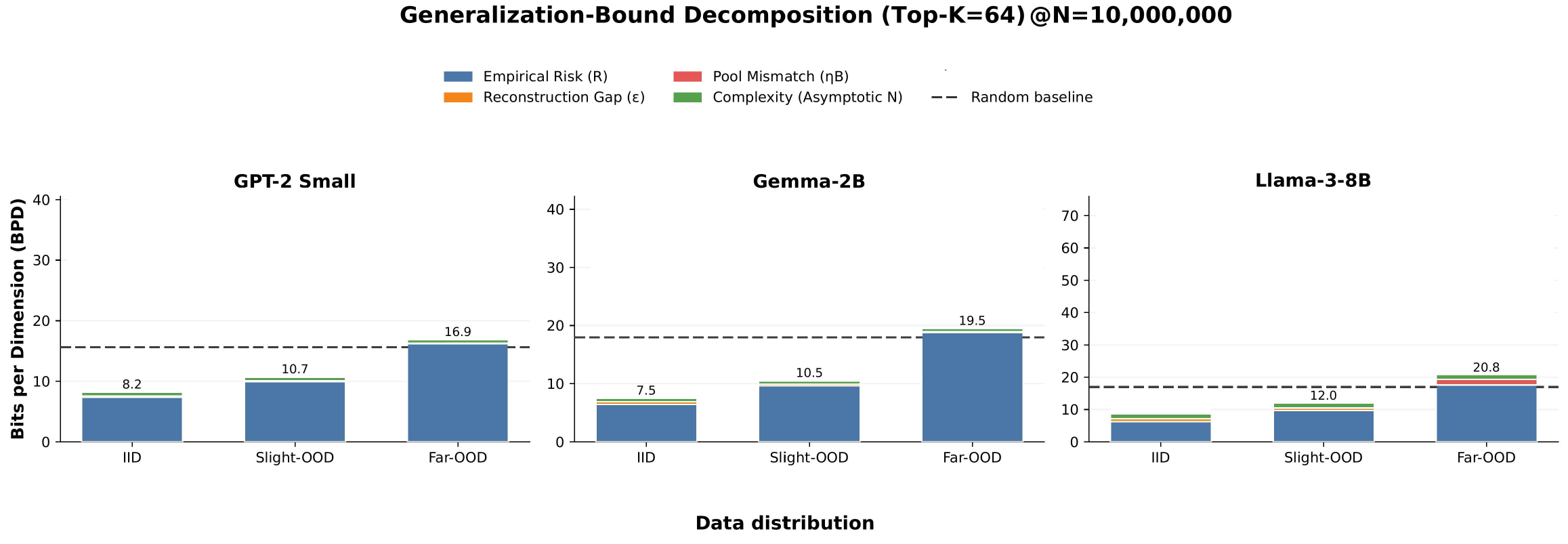}
    \caption{\textbf{Decomposition of the certificate under synthetic corruption.} We visualize the contributions of empirical risk, reconstruction gap, pool mismatch, and complexity to the total bound (Top-$k=64$). On uncorrupted text, the bound lies below the uninformed baseline (dashed line), whereas increasing random token corruption progressively weakens the certificate (15\% corruption and 100\% corruption). The degradation is driven primarily by empirical risk rather than by the complexity term.}
    \label{fig:bound_decomposition}
\end{figure*}

Figure~\ref{fig:bound_decomposition} shows how the certificate changes for GPT-2 Small, Gemma-2B, and Llama-3-8B as random corruption is increased. On uncorrupted English text, the total bound is well below the uninformed baseline, indicating that the sparse proxy remains informative in this regime. As the corruption level increases, the bound rises substantially and eventually becomes non-informative relative to the same baseline.

The decomposition helps localize the source of this degradation. The dominant change comes from the empirical risk term, while the complexity contribution remains essentially unchanged. This indicates that failure under corruption is not due to a looser combinatorial complexity penalty, but to the sparse proxy becoming less predictive on inputs that no longer resemble the structured text on which the SAE was trained.

\section{Operational meaning of trust}\label{trust-elaborated}

We use \emph{trust} in a minimal risk-faithful sense: the sparse SAE view should
(i) certify that the underlying frozen model is non-trivial relative to an uninformed
baseline, thus useful, and (ii) remain close to that model in population risk, thus faithful. Let the empirical bound be:

For any $P$, and corresponding $G$,
\begin{multline}
U_\delta(G)
:=
\widehat R_{\mathcal{S}}(h_{G})
+\widehat \epsilon_{\mathrm{loss}}
+B\!\left(\hat\eta+\sqrt{\frac{\log(2/\delta)}{2N}}\right)\\
+B\sqrt{\frac{P\log(e m/ P)+\log(2/\delta)}{2N}}
+B\sqrt{\frac{\log(4/\delta)}{2N}}
\end{multline}
so that, with probability at least $1-\delta$,
\[
R(M)\le U_\delta(G).
\]
Hence, if $U_\delta(G) < R_{\mathrm{uninf}}$, meaning the bound is better than random baseline, then 

\[
R(M) < R_{\mathrm{uninf}},
\]

meaning that, \textbf{through the bound, the sparse view certifies that
the underlying frozen model is non-trivial relative to the uninformed baseline.}

To quantify faithfulness, we bound the risk discrepancy between the frozen model and the
pool-restricted sparse proxy. By Lemma~1,
\[
|R(M)-R(S\circ M)|\le \epsilon_{\mathrm{loss}},
\]
and by Lemma~1 together with Lemma~2,
\[
|R(S\circ M)-R(h_{G^\star})|\le \eta B.
\]
Therefore, by triangle inequality,
\[
|R(M)-R(h_{G^\star})|
\le
\epsilon_{\mathrm{loss}}+\eta B.
\]

This yields the following notion of explanatory trust.

\paragraph{Definition (risk-faithful explanatory trust).}
Fix $\gamma>0$ and $\tau>0$. We say that the SAE-based sparse lens is
$(\gamma,\tau)$-trustworthy on distribution $D$ if, with high probability,
\begin{multline}
   R(M)\le R_{\mathrm{uninf}}-\gamma \quad (\text{\textbf{Usefulness}})
\qquad\text{and} \\
|R(M)-R(h_{G^\star})|\le \tau \quad (\text{\textbf{Faithfulness}}) 
\end{multline}

The first condition captures usefulness; the second captures faithfulness. In our setting,
a sufficient choice is
\[
\gamma := R_{\mathrm{uninf}} - U_\delta(\hat G),
\qquad
\tau := \epsilon_{\mathrm{loss}}+\eta B.
\]
Thus, trust means that the sparse proxy is simultaneously informative about the frozen
model and behaviorally close to it.

\section{Qualitative case study}\label{sec:qualitative_case_study_app}


\definecolor{QBlueLite}{HTML}{EAF2FA}
\definecolor{QBorder}{HTML}{D5DCE5}
\definecolor{QGray}{HTML}{5B6573}
\definecolor{QPrompt}{HTML}{F6F8FB}
\definecolor{QGreen}{HTML}{1E8E5A}
\definecolor{QRed}{HTML}{C43D2B}

\newtcolorbox{qualpanel}[2][]{%
  enhanced,
  colback=white,
  colframe=QBorder,
  boxrule=0.6pt,
  arc=1.5mm,
  left=1.5mm,right=1.5mm,top=1.5mm,bottom=1.5mm,
  title={#2},
  colbacktitle=QBlueLite,
  coltitle=black,
  fonttitle=\bfseries\footnotesize,
  attach boxed title to top left={xshift=1mm,yshift=-1mm},
  boxed title style={arc=1mm,boxrule=0pt,colback=QBlueLite,left=1.2mm,right=1.2mm,top=0.5mm,bottom=0.5mm},
  #1
}

\begin{figure*}[t]
\centering
\small

\begin{tcolorbox}[
  enhanced,
  colback=white,
  colframe=QBorder,
  boxrule=0.8pt,
  arc=2mm,
  left=1.5mm,right=1.5mm,top=1.5mm,bottom=1.5mm,
  width=\textwidth,
  title={\textbf{Qualitative case study: Later-layer SAE concepts}},
  colbacktitle=white,
  coltitle=black,
  fonttitle=\bfseries\large,
  boxed title style={boxrule=0pt,colback=white}
]

\begin{minipage}[t]{0.485\textwidth}
\begin{qualpanel}{Prompt A: Deductive inheritance \hfill \textcolor{QGray}{Layer 12}}
\textbf{Prompt:} {\scriptsize\ttfamily\vspace{0.5mm}
\begin{tcolorbox}[colback=QPrompt,colframe=QBorder,boxrule=0.4pt,arc=1mm,left=1mm,right=1mm,top=0.5mm,bottom=0.5mm]
Premises: Every violinist is a musician. No musician is completely silent on stage. Mira is a violinist. Therefore, Mira is not completely
\end{tcolorbox}}

\vspace{0.5mm}
{\scriptsize \textbf{Prediction:} \textbf{Gold:} \texttt{' silent'} \hspace{0.5mm}|\hspace{0.5mm} \textbf{Base:} \texttt{' silent'} \hspace{0.5mm}|\hspace{0.5mm} \textbf{Proxy:} \texttt{' !'} \hspace{0.5mm}|\hspace{0.5mm} \textbf{KL:} 8.12}

\vspace{1.5mm}
\textbf{Top 3 SAE concepts:}\par\vspace{0.5mm}
{\scriptsize
\begin{tabular}{@{} l l l >{\raggedright\arraybackslash}p{3.2cm} @{}}
\textbf{F3807} & \textcolor{QGreen}{\textbf{supports}} & $a=0.59$ & \textcolor{QGray}{\texttt{isses}, \texttt{/full}, \texttt{ARRIER}} \\
\textbf{F99917} & \textcolor{QGreen}{\textbf{supports}} & $a=0.34$ & \textcolor{QGray}{\texttt{stics}, \texttt{references}, \texttt{sip}} \\
\textbf{F65022} & \textcolor{QGreen}{\textbf{supports}} & $a=0.40$ & \textcolor{QGray}{\texttt{anymore}, \texttt{necessarily}, \texttt{yet}} \\
\end{tabular}
}
\end{qualpanel}
\end{minipage}%
\hfill
\begin{minipage}[t]{0.485\textwidth}
\begin{qualpanel}{Prompt A: Deductive inheritance \hfill \textcolor{QGray}{Layer 24}}
\textbf{Prompt:} {\scriptsize\ttfamily\vspace{0.5mm}
\begin{tcolorbox}[colback=QPrompt,colframe=QBorder,boxrule=0.4pt,arc=1mm,left=1mm,right=1mm,top=0.5mm,bottom=0.5mm]
Premises: Every violinist is a musician. No musician is completely silent on stage. Mira is a violinist. Therefore, Mira is not completely
\end{tcolorbox}}

\vspace{0.5mm}
{\scriptsize \textbf{Prediction:} \textbf{Gold:} \texttt{' silent'} \hspace{0.5mm}|\hspace{0.5mm} \textbf{Base:} \texttt{' silent'} \hspace{0.5mm}|\hspace{0.5mm} \textbf{Proxy:} \texttt{' silent'} \hspace{0.5mm}|\hspace{0.5mm} \textbf{KL:} 0.01}

\vspace{1.5mm}
\textbf{Top 3 SAE concepts:}\par\vspace{0.5mm}
{\scriptsize
\begin{tabular}{@{} l l l >{\raggedright\arraybackslash}p{3.2cm} @{}}
\textbf{F36987} & \textcolor{QGreen}{\textbf{supports}} & $a=3.08$ & \textcolor{QGray}{\texttt{silent}, \texttt{silence}, \texttt{quiet}} \\
\textbf{F124438} & \textcolor{QGreen}{\textbf{supports}} & $a=1.38$ & \textcolor{QGray}{\texttt{stateless}, \texttt{ainless}, \texttt{odor}} \\
\textbf{F71772} & \textcolor{QGreen}{\textbf{supports}} & $a=2.33$ & \textcolor{QGray}{\texttt{itude}, \texttt{mente}, \texttt{strangers}} \\
\end{tabular}
}
\end{qualpanel}
\end{minipage}

\vspace{2mm}

\begin{minipage}[t]{0.485\textwidth}
\begin{qualpanel}{Prompt B: Domain constraint \hfill \textcolor{QGray}{Layer 12}}
\textbf{Prompt:} {\scriptsize\ttfamily\vspace{0.5mm}
\begin{tcolorbox}[colback=QPrompt,colframe=QBorder,boxrule=0.4pt,arc=1mm,left=1mm,right=1mm,top=0.5mm,bottom=0.5mm]
Premises: Every enzyme in this tray is a protein. No protein in this tray is insoluble. Catalase is an enzyme in this tray. Therefore, Catalase is not
\end{tcolorbox}}

\vspace{0.5mm}
{\scriptsize \textbf{Prediction:} \textbf{Gold:} \texttt{' insol'} \hspace{0.5mm}|\hspace{0.5mm} \textbf{Base:} \texttt{' insol'} \hspace{0.5mm}|\hspace{0.5mm} \textbf{Proxy:} \texttt{' !'} \hspace{0.5mm}|\hspace{0.5mm} \textbf{KL:} 7.61}

\vspace{1.5mm}
\textbf{Top 3 SAE concepts:}\par\vspace{0.5mm}
{\scriptsize
\begin{tabular}{@{} l l l >{\raggedright\arraybackslash}p{3.2cm} @{}}
\textbf{F106873} & \textcolor{QGreen}{\textbf{supports}} & $a=0.53$ & \textcolor{QGray}{\texttt{must}, \texttt{adol}, \texttt{eroon}} \\
\textbf{F114367} & \textcolor{QGreen}{\textbf{supports}} & $a=0.63$ & \textcolor{QGray}{\texttt{oriously}, \texttt{ched}, \texttt{withstanding}} \\
\textbf{F65022} & \textcolor{QGreen}{\textbf{supports}} & $a=0.40$ & \textcolor{QGray}{\texttt{anymore}, \texttt{necessarily}, \texttt{yet}} \\
\end{tabular}
}
\end{qualpanel}
\end{minipage}%
\hfill
\begin{minipage}[t]{0.485\textwidth}
\begin{qualpanel}{Prompt B: Domain constraint \hfill \textcolor{QGray}{Layer 24}}
\textbf{Prompt:} {\scriptsize\ttfamily\vspace{0.5mm}
\begin{tcolorbox}[colback=QPrompt,colframe=QBorder,boxrule=0.4pt,arc=1mm,left=1mm,right=1mm,top=0.5mm,bottom=0.5mm]
Premises: Every enzyme in this tray is a protein. No protein in this tray is insoluble. Catalase is an enzyme in this tray. Therefore, Catalase is not
\end{tcolorbox}}

\vspace{0.5mm}
{\scriptsize \textbf{Prediction:} \textbf{Gold:} \texttt{' insol'} \hspace{0.5mm}|\hspace{0.5mm} \textbf{Base:} \texttt{' insol'} \hspace{0.5mm}|\hspace{0.5mm} \textbf{Proxy:} \texttt{' a'} \hspace{0.5mm}|\hspace{0.5mm} \textbf{KL:} 1.47}

\vspace{1.5mm}
\textbf{Top 3 SAE concepts:}\par\vspace{0.5mm}
{\scriptsize
\begin{tabular}{@{} l l l >{\raggedright\arraybackslash}p{3.2cm} @{}}
\textbf{F20271} & \textcolor{QGreen}{\textbf{supports}} & $a=1.05$ & \textcolor{QGray}{\texttt{sol}, \texttt{dissolve}, \texttt{soluble}} \\
\textbf{F103948} & \textcolor{QGreen}{\textbf{supports}} & $a=1.32$ & \textcolor{QGray}{\texttt{suspended}, \texttt{traces}, \texttt{content}} \\
\textbf{F119119} & \textcolor{QGreen}{\textbf{supports}} & $a=0.60$ & \textcolor{QGray}{\texttt{active}, \texttt{\_active}, \texttt{\_\_active}} \\
\end{tabular}
}
\end{qualpanel}
\end{minipage}

\end{tcolorbox}
\vspace{-2mm}
\caption{Qualitative comparison of early and late SAE layers on two deduction prompts. We show (i) the prompt, (ii) the base--proxy agreement summary through KL, and (iii) the top active SAE concepts verbalized using feature-level logit-lens tokens. Lower KL and sharper token verbalizations at later layers indicate a more faithful sparse proxy. Note, all gold/base/proxy next-token strings are
tokenizer subword pieces rather than necessarily whole words.}
\label{fig:qualitative_late_layer_faithfulness}
\end{figure*}

We complement the dataset-level certification results with a qualitative check of whether tighter late-layer certificates are accompanied by more semantically aligned SAE features. For a prompt $x$, we compare the base model $M$ with an SAE-patched proxy $S \circ M$, obtained by replacing the residual activation $h_\ell(x)$ at layer $\ell$ with its reconstruction $\hat h_\ell(x)$ and executing the remaining forward pass. We measure proxy drift using the next-token divergence $
\mathrm{KL}\!\left(p_M(\cdot \mid x)\,\|\,p_{S\circ M}(\cdot \mid x)\right)$, where lower values indicate that the sparse reconstruction better preserves the model's predictive behavior. To inspect the active SAE features, we verbalize each feature in vocabulary space using a feature-level logit lens \citep{wang2025logitlens4llmsextendinglogitlens}. Let the SAE decoder be written as a dictionary of feature directions, so that each active feature $j$ contributes a decoder vector $d_j \in \mathbb{R}^{d_{\mathrm{model}}}$ to the reconstructed residual stream. Given the LM unembedding matrix $W_U \in \mathbb{R}^{d_{\mathrm{model}}\times |\mathcal{V}|}$, we score vocabulary items by $s_j=d_jW_U$ and report the top tokens as lexical hints for feature $j$.


Figure~\ref{fig:qualitative_late_layer_faithfulness} shows two deduction prompts comparing an early layer (Layer~12) and a late layer (Layer~24). At Layer~12, the proxy exhibits high KL divergence ($>7.5$), incorrect next-token predictions (e.g., predicting `!' instead of `silent' or `insol'), and active features whose logit-lens verbalizations (e.g., `isses', `must') are weakly related to the context. At Layer~24, the proxy is much closer to the base model: KL drops sharply, the proxy approaches or matches the target next-token predictions, and the top active features become contextually aligned, verbalizing relevant completions such as `silent, silence, quiet' and `sol, dissolve, soluble'. Thus, the layers that yield tighter certificates also produce sparse proxies whose outputs and feature verbalizations better match the base model's next-token behavior. As an additional behavioral check, Appendix~\ref{dtp} shows that the same layerwise pattern holds on three downstream tasks: layers that are easier to certify also incur smaller task-performance degradation, suggesting that the certificate tracks practical behavioral fidelity beyond the bounded-loss analysis.

\section{Why Perfect Reconstruction Does Not Trivialize the Certificate}
\label{app:perfect_copy}

A natural concern is that the proposed certificate may simply reward an SAE for acting as a perfect copy machine. In particular, if the SAE reconstruction exactly reproduces the hidden activation of the frozen LM, then the reconstruction-gap term should vanish. Does this imply that the resulting certificate automatically becomes tight, independently of whether the learned sparse features are reusable or meaningful?

We show that the answer is no. Perfect reconstruction of the unrestricted SAE proxy removes only one term in the bound. The certificate remains sensitive to whether the sparse support generalizes from calibration to evaluation, and to whether the reconstruction is achieved through a compact, reusable concept pool. Thus, the bound is not merely a reconstruction score; it is a compression--fidelity certificate.

Recall the empirical certificate:
\begin{multline}\label{eq:empirical_occam_final_1}
R(M) \le \widehat{R}_S(h_{G^*}) + \widehat{\epsilon}_{\mathrm{loss}} + B\left(\hat{\eta}+\sqrt{\frac{\log(2/\delta)}{2N}}\right) \\
+ B\sqrt{\frac{P\log(em/P)+\log(2/\delta)}{2N}} + B\sqrt{\frac{\log(4/\delta)}{2N}}
\end{multline} 
Here $\widehat\epsilon_{\mathrm{loss}}$ measures the loss-level discrepancy between the base model $M$ and the unrestricted SAE proxy $S\circ M$, while $\widehat\eta$ measures the probability that the evaluation-time Top-$k$ support is not covered by the calibration-induced concept pool $G^\ast$. The term $P=|G^\ast|$ controls the size of the finite proxy class.

\paragraph{Perfect unrestricted reconstruction removes only one term.}
Suppose that the SAE acts as a perfect unrestricted copy of the base activation on the evaluation distribution, so that
\[
S\circ M = M
\qquad\text{and hence}\qquad
\epsilon_{\mathrm{loss}} = 0.
\]
This eliminates the reconstruction-gap term. However, the certified predictor in the bound is not only the unrestricted proxy $S\circ M$; it is the pool-restricted proxy $h_{G^\ast}$, obtained by masking all SAE features outside the calibration-induced pool $G^\ast$. Therefore, even if $S\circ M$ reconstructs perfectly, the certificate can still be loose if the active supports required at evaluation time are not contained in $G^\ast$.

Formally, let
\[
A(x) := \operatorname{supp}\!\left(\operatorname{TopK}(S_E(M(x)))\right)
\]
denote the active Top-$k$ SAE support for input $x$, and define the calibration-induced concept pool
\[
G^\ast := \bigcup_{x\in D_{\mathrm{cal}}} A(x).
\]
The pool-mismatch probability is
\[
\eta
:=
\Pr_{x\sim\mathcal D}\!\left[A(x)\nsubseteq G^\ast\right].
\]
This term is independent of pointwise reconstruction quality of the unrestricted SAE proxy. It measures whether the sparse features needed at evaluation time are covered by the concept pool discovered during calibration.

\begin{proposition}[Perfect unrestricted copying is insufficient for a non-vacuous certificate]
\label{prop:perfect_copy_not_enough}
Let $B=\log_2(|\mathcal V|/\alpha)$ be the uninformed smoothed log-loss baseline. Suppose the unrestricted SAE proxy reconstructs the base model perfectly on the evaluation distribution, so that $\epsilon_{\mathrm{loss}}=0$ and $S\circ M=M$. Ignoring finite-sample concentration terms, a sufficient condition for the resulting certificate to remain vacuous is
\[
R(h_{G^\ast}) + \eta B \geq B.
\]
In particular, even in the favorable case where the pool-restricted proxy has the same risk as the base model, $R(h_{G^\ast})=R(M)$, the asymptotic certificate is vacuous whenever
\[
\eta \geq 1-\frac{R(M)}{B}.
\]
Therefore, perfect reconstruction of the unrestricted SAE proxy does not by itself imply a non-vacuous certificate.
\end{proposition}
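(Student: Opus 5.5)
The argument is essentially a direct inspection of the certificate in Theorem~\ref{thm:occam_pool} once the finite-sample terms are set aside. I will drop the three $O(B/\sqrt{N})$ terms: the complexity term $B\sqrt{(\log|\mathcal H_P|+\log(2/\delta))/(2N)}$, the concentration term for $\epsilon_{\mathrm{loss}}$, and the concentration term for $\eta$. I will also replace the empirical quantities by their population counterparts, which is the limit $N\to\infty$ at fixed $G^\ast$ and $P$. What remains is the asymptotic floor
\[
U_\infty := R(h_{G^\ast}) + \epsilon_{\mathrm{loss}} + \eta B .
\]
Under the hypothesis $S\circ M = M$ on the evaluation distribution we have $\Delta_{\mathrm{loss}}(z)=0$ pointwise, hence $\epsilon_{\mathrm{loss}}=0$, and $U_\infty = R(h_{G^\ast}) + \eta B$.

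The certificate is vacuous when it fails to beat the uninformed baseline $B$, i.e.\ when $U_\infty \ge B$. Since the neglected finite-sample terms are nonnegative, the full bound is always at least $U_\infty$. So $R(h_{G^\ast})+\eta B\ge B$ is sufficient for vacuity at every $N$, which gives the first claim.

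For the second claim, substitute the favorable case $R(h_{G^\ast})=R(M)$. The condition becomes $R(M)+\eta B\ge B$. Because $B=\log_2(V/\alpha)>0$ (as $V\ge1>\alpha$), dividing by $B$ gives $\eta\ge 1-R(M)/B$. Finally, I will point out that $\eta$ is determined solely by the support coverage $A(x)\subseteq G^\ast$ and not by the reconstruction error. Hence nothing in the perfect-copy hypothesis prevents $\eta$ from being large; for instance, evaluation supports may fall outside the calibration pool. It follows that perfect reconstruction alone cannot imply non-vacuity.

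The only step needing care is the interpretation of ``ignoring finite-sample terms'': the empirical terms must be replaced by population ones, and one must observe that the omitted terms only enlarge the bound, so sufficiency is preserved. I expect this justification, rather than any computation, to be the main point to state precisely. Beyond it, the argument is bookkeeping on Theorem~\ref{thm:occam_pool} together with Lemma~\ref{lem:decomp} for $\epsilon_{\mathrm{loss}}=0$.
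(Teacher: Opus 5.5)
Your argument is the same as the paper's: let the concentration terms vanish as $N\to\infty$, use $\epsilon_{\mathrm{loss}}=0$ to reduce the asymptotic certificate to $R(h_{G^\ast})+\eta B$ and compare it with $B$, then specialize to $R(h_{G^\ast})=R(M)$ and divide by $B>0$.

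Drop your extra claim that the condition forces vacuity at every $N$ because the omitted terms are nonnegative. Replacing $\widehat R_S(h_{G^\ast})$ and $\hat\eta$ by $R(h_{G^\ast})$ and $\eta$ is not a nonnegative omission, since empirical means can fall below their population values at finite $N$. That strengthening therefore holds only with high probability, via the lower Hoeffding tail being dominated by the concentration terms, which is why the proposition is stated only asymptotically.
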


\begin{proof}
As $N\to\infty$, the finite-sample concentration terms in Eq.\ref{eq:empirical_occam_final_1} vanish. Under perfect unrestricted reconstruction, $\epsilon_{\mathrm{loss}}=0$. The asymptotic certificate therefore reduces to
\[
R(M)
\leq
R(h_{G^\ast})+\eta B.
\]
The certificate is non-vacuous only if its right-hand side is below the uninformed baseline $B$. Hence, it remains vacuous whenever
\[
R(h_{G^\ast})+\eta B \geq B.
\]
If additionally $R(h_{G^\ast})=R(M)$, this condition becomes
\[
R(M)+\eta B \geq B,
\]
or equivalently,
\[
\eta \geq 1-\frac{R(M)}{B}.
\]
Thus, even under perfect unrestricted copying, support mismatch alone can prevent the certificate from becoming non-vacuous. A better base model, meaning lower $R(M)$, can tolerate a larger mismatch rate before the certificate becomes vacuous.
\end{proof}

\paragraph{Why this penalizes non-transferable copy-like representations.}
The proposition shows that the certificate is not automatically optimized by pointwise copying. A copy-like SAE may reconstruct individual activations well, but if it does so using highly input-specific or low-reuse features, then the union of supports observed during calibration may fail to cover the features activated on evaluation inputs. This increases $\eta$. Similarly, if good copying requires a very large concept pool $G^\ast$, then $P=|G^\ast|$ becomes large and the sparse complexity term increases at finite sample size:
\[
B\sqrt{
\frac{
P\log(em/P)+\log(2/\delta)
}{2N}
}.
\]
Thus, the certificate favors sparse proxies that are not only locally accurate, but also reusable and support-stable across samples.

\paragraph{Relationship to semanticity.}
This argument should not be read as a proof that every SAE achieving a strong certificate has human-semantic features. The theorem certifies an operational property: the SAE-induced sparse proxy is informative, low-distortion, and compressive enough to support a non-vacuous risk certificate for the frozen LM. Human-semanticity of individual features is not directly encoded in the theorem. However, the certificate can indirectly reward semantically organized representations when such organization leads to reusable, stable, low-distortion sparse supports. Conversely, a purely copy-like representation is not guaranteed to be certified unless it also satisfies these reuse and stability requirements.

This is why the feature-shuffling ablation is informative. Shuffling preserves the per-example sparsity pattern and activation magnitudes, but destroys feature identity. The resulting degradation shows that the certificate is sensitive to the organization of feature directions, not merely to the number of active features. Therefore, the proposed bound is nontrivially useful: it does not certify all accurate reconstructions equally, but distinguishes sparse proxies according to whether their reconstruction is achieved through a stable and reusable concept pool.

\end{document}